\documentclass[10pt,journal,compsoc]{IEEEtran}

\ifCLASSOPTIONcompsoc
  \usepackage[nocompress]{cite}
\else
  \usepackage{cite}
\fi

\usepackage{amsmath}
\usepackage{amsfonts}
\usepackage{amssymb}
\usepackage{bm}
\usepackage{mathrsfs}
\usepackage{mathtools}
\usepackage{amsthm}
\usepackage{algorithm}
\usepackage{algorithmic}

\usepackage{array}
\usepackage{booktabs}
\usepackage{multirow}
\usepackage{makecell}

\usepackage{graphicx}
\usepackage[caption=false,font=normalsize,labelfont=sf,textfont=sf]{subfig}
\usepackage{stfloats}
\usepackage{textcomp}

\usepackage[dvipsnames,table]{xcolor}
\usepackage{colortbl}

\usepackage{pifont}
\usepackage{xspace}

\usepackage{url}

\usepackage[hidelinks]{hyperref}

\newcommand{\sys}{CrystalMem\xspace}

\newtheorem{theorem}{Theorem}

\newtheorem{proposition}[theorem]{Proposition}

\newtheorem{definition}[theorem]{Definition}
\newtheorem{assumption}{Assumption}
\theoremstyle{remark}

\newtheorem{observation}{Observation}

\colorlet{phaseI}{rgb:red!2,65;green!30,60;blue!20,125}
\colorlet{phaseII}{rgb:red!2,65;green!30,90;blue!20,125}
\colorlet{phaseIII}{rgb:red!60,100;green!20,90;blue!30,125}

\newcommand{\Comment}[1]{\hfill$\triangleright$ \textit{#1}}

\newcommand{\best}[1]{\textcolor{red}{\textbf{#1}}}

\newcommand{\venue}[1]{{\scriptsize\color{gray}[#1]}}
\newcommand{\std}[1]{$_{\pm\text{\scriptsize #1}}$}
\definecolor{deepred}{rgb}{0.6,0,0}
\definecolor{crystalblue}{rgb}{0.20,0.40,0.80}

\definecolor{darkgreen}{RGB}{0, 181, 18}
\definecolor{darkred}{RGB}{252, 90, 90}
\newcommand{\gain}[1]{$\uparrow$ #1}
\newcommand{\mygreen}[1]{\cellcolor{darkgreen!#1}}

\definecolor{DeltaBg}{HTML}{D4F2D7}
\definecolor{SearchBg}{HTML}{C2E6F5}
\definecolor{AgenticBg}{HTML}{F5C2CC}
\definecolor{MathBg}{HTML}{E6D4F2}
\definecolor{ScienceBg}{HTML}{FBE0BC}

\begin{document}

\title{CrystalMem: Elastic Memory for Self-Evolving\\ LLM Agents via Knowledge Crystallization}

\author{Beining~Wu,~\IEEEmembership{Member,~IEEE,}
and Jun~Huang,~\IEEEmembership{Senior~Member,~IEEE}%
\IEEEcompsocitemizethanks{\IEEEcompsocthanksitem
Beining Wu and Jun Huang are with the Department of Electrical Engineering and
Computer Science, South Dakota State University, Brookings, SD 57006 USA.\protect\\
E-mail: Wu.Beining@jacks.sdstate.edu; Jun.Huang@sdstate.edu}}

\markboth{Preprint}%
{Wu and Huang: CrystalMem}

\IEEEtitleabstractindextext{%
\begin{abstract}
Memory for self-evolving large language model (LLM) agents is often
provisioned as if its byte budget only grows. Cloud platforms, however,
adjust quotas with load and cost, and we show that capability does not
follow the budget back up: after a squeeze-and-recover cycle, the agent
settles below its pre-squeeze level, a gap we call \emph{memory
hysteresis}. The cause is structural. Deletion and one-way compression
discard the material needed for later rebuilding, and we prove that any
policy that only keeps or drops entries carries a residual-deficit floor.
We propose \sys{} (\underline{\textbf{Crystal}}lized
\underline{\textbf{Mem}}ory), an elastic memory sidecar that demotes
entries across four fidelity states under a crystallization-energy
schedule, orders demotions by advantage-weighted influence with dependency
coupling, and recovers capability through verified recrystallization under
explicit compute and byte caps. Across seven environments, seventeen
methods, and six backbones, with multi-tenant serving and a physical
edge--cloud deployment, \sys{} achieves the highest restored capability in
every setting and closes the loop left open by every baseline. From a
$50\%$ byte budget, \sys{} matches the strongest budgeted baseline at full
provision on every environment; at equal budgets, it leads by $+4.6$\,pp
on average.
\end{abstract}

\begin{IEEEkeywords}
Agent memory, self-evolving LLM agents, knowledge crystallization, elastic
resource management, cloud computing, LLM serving, memory management, energy
efficiency.
\end{IEEEkeywords}}

\maketitle

\IEEEdisplaynontitleabstractindextext
\IEEEpeerreviewmaketitle

\section{Introduction}
\label{sec:intro}

\IEEEPARstart{S}{elf}-evolving large language model (LLM) agents improve by
accumulating experience. Trajectories, distilled skills, and facts are
written to an external memory and retrieved for later
tasks~\cite{Du2026ARXIV,Wu2026TNSE,Wang2023ARXIV,Ouyang2026ICLR,Fang2025ARXIV,Wu2026ARXIV1,Ding2026TAAS}.
Production
systems now treat this store as a managed memory plane, with consolidation
pipelines~\cite{Packer2024COLM,Chhikara2025ARXIV,Kang2025EMNLP} and
value-based governance~\cite{Wu2026ARXIV,Yu2026ARXIV,Ding2026ARXIVEASE,Ding2026MNET,Ding2026ICDCS}
deciding what stays.
One layer below, the cloud serving stack already treats resources as
elastic: compute is
scheduled~\cite{Du2026TCC,Zhang2026TCC,Fang2026ARXIVLLMSearch,Duan2023COMST,Huang2025TMC,Ding2026ARXIVTwinLoop,Wu2026TON},
the key--value
cache is paged and reclaimed~\cite{Kwon2023SOSP}, and quotas move with load
and cost. Agent memory is the exception. Its byte budget is usually
provisioned as if it could only grow, although a shared cloud offers no
such guarantee.

Budgets move in every other tier, but prior work has not asked what a
round trip of the memory budget does to agent capability. Our measurements
show that capability does not simply return with the bytes. When the quota
falls, current stores shed bytes by deleting entries or by compressing
them in one direction; both choices discard material that would be needed
to rebuild the entry later. When the quota returns, nothing regenerates.
The agent settles below its pre-squeeze level even though the original
provision is fully available again (Figs.~\ref{fig:recovery-box}
and~\ref{fig:damage-bars}). We call this gap \emph{memory hysteresis}: the
store's capability depends not only on the budget it has, but also on the
path the budget took. Retention therefore has to become reversible.
Compression ladders that demote entries into coarser
representations~\cite{Kerestecioglu2026ARXIV} keep more information than
deletion, but they only move downward, so each squeeze leaves a permanent
loss. Reversible compression exists~\cite{Wang2025ACL}, yet reversibility
alone does not decide which entry to demote under a falling budget or
whether a later reconstruction is faithful. Value-based governance chooses
better survivors~\cite{Wu2026ARXIV,Wu2026ICDCS,Alla2025ARXIV,Pudasaini2026HPSR}, but a policy that keeps
an entry verbatim or drops it leaves no residue to rebuild from. Closing
the loop requires fidelity states to demote into, a schedule that moves
entries in both directions as the budget moves, and a gate that verifies
what returns.

These requirements sit across three research lines. Agent memory and
self-evolution build stores that grow, consolidate, and retrieve under a
fixed provision~\cite{Xu2025NEURIPS,Packer2024COLM,Chhikara2025ARXIV,Kang2025EMNLP,Zhou2026ICLR}.
Memory compression and governance reduce or curate the store through
one-way ladders, opportunistic reversibility, or binary
selection~\cite{Alqithami2025ARXIV,Kerestecioglu2026ARXIV,Wang2025ACL,Dai2026ACL,Wu2026ARXIV,Zhang2026ICLRW}.
Elastic serving and evaluation reclaim resources below the memory plane
and measure memory at fixed
provisions~\cite{Kwon2023SOSP,Lin2025ARXIV,Du2026TCC,Zhang2026TCC,Margalit2026ARXIV,Hu2026ICLR,Kwon2026ARXIV,Wu2026COMST,Wu2023ACCESS,Wu2026MNET,Fang2025TON,Dong2026TCCN,Xing2025ACR,Pan2023SCIS}.
None treats fidelity as a two-way function of the byte budget, schedules
demotion and promotion against that budget, or verifies what a promotion
brings back. As a result, no current system closes the capability loop
under an elastic memory budget.

To close this loop, we introduce \sys{}
(\underline{\textbf{Crystal}}lized \underline{\textbf{Mem}}ory), an
elastic memory sidecar for self-evolving agents. \sys{} treats retention
as reversible state assignment through two coupled mechanisms. First,
\emph{knowledge crystallization} assigns every entry to one of four
fidelity states, from full form down to a minimal trace. A budget-driven
schedule prices each demotion by the utility it forgoes now plus the
compute a later promotion will cost, then takes the cheapest moves first
until the store fits the incoming budget. Entry values come from an
influence estimator that attributes task outcomes to the entries that
served them. Value also flows over a dependency graph, so entries that
anchor other entries sink last. Second, \emph{verified recrystallization}
keeps a residue and a set of invariants for every demoted entry. When the
budget expands, the store regenerates entries from their residues and
admits only reconstructions that pass the invariant check, under an
explicit cap on regeneration compute. Both passes run inside the serving
path with one sorted sweep per stage and no training. Deletion becomes the
last resort instead of the default.

In summary, our main contributions are as follows. They divide into
findings and designs: the phenomenon and its floor are findings; the
crystallization mechanisms are designs; the serving stack and benchmark
environments are adopted from prior work and are not claimed as
contributions.

\begin{itemize}
\item We identify and measure \emph{memory hysteresis} in elastic-budget
agent memory: a squeeze-and-recover cycle leaves capability below its
pre-squeeze level even after the budget fully returns. We formalize the
effect with a loop-area metric and prove a residual-deficit floor for
keep-or-drop policies.

\item We propose \sys{}: a four-state fidelity ladder scheduled by
crystallization energy under the live budget, entry values estimated from
advantage-weighted influence with dependency coupling, and verified
recrystallization under explicit compute and byte caps. We prove
feasibility, near-optimality, and loop-closure guarantees.

\item We evaluate \sys{} against seventeen methods across seven
environments and six backbones, in multi-tenant serving, and on a
physical edge--cloud testbed. \sys{} closes the loop left open by every
baseline, matches full-provision capability at half the byte budget, and
recovers capability at a fraction of the energy needed to re-experience
the underlying episodes.
\end{itemize}

The rest of this paper is organized as follows.
Section~\ref{sec:related} reviews related work.
Section~\ref{sec:hysteresis} formalizes elastic memory budgets and
measures memory hysteresis. Section~\ref{sec:design} presents \sys{}.
Section~\ref{sec:experiments} reports the experimental evaluation.
Section~\ref{sec:conclusion} concludes the paper.

\section{Related Work}
\label{sec:related}

\subsection{Agent Memory and Self-Evolution}

A self-evolving large language model (LLM) agent improves without weight
updates by writing experience into external memory and retrieving it at
inference time~\cite{Du2026ARXIV,Wu2026TNSE,Wu2026ARXIVPRISM,Wu2026ARXIVRELIEF,Zhang2026ARXIVHarness}.
One line of work builds the store itself.
Xu \textit{et al.}~\cite{Xu2025NEURIPS} propose \emph{A-MEM}, which links
and grows notes for long-horizon recall; \emph{MemGPT}~\cite{Packer2024COLM}
pages entries between the context window and external storage;
\emph{Mem0}~\cite{Chhikara2025ARXIV} extracts and consolidates long-term
memory at production scale; and \emph{MemoryOS}~\cite{Kang2025EMNLP} brings
operating-system structure to storage, update, and retrieval. These systems
make memory larger, better organized, and easier to retrieve.

A second line turns experience into competence. \emph{Voyager}~\cite{Wang2023ARXIV}
grows a skill library from open-ended exploration; \emph{Agent Workflow
Memory}~\cite{Wang2024ARXIV} and \emph{ReasoningBank}~\cite{Ouyang2026ICLR}
distill trajectories into reusable workflows and strategies; \emph{EvolveR}~\cite{Wu2026ICML}
and \emph{PolySkill}~\cite{Yu2026ICLR} close an experience-driven lifecycle
over skills; \emph{AutoRefine}~\cite{Qiu2026ARXIV} keeps stored expertise
current; and \emph{MEM1}~\cite{Zhou2026ICLR} and
\emph{Memory-as-Action}~\cite{Zhang2026ACL} train the agent to curate its
own context. Across both lines, the store grows and is trimmed under a fixed
provision. The byte budget is treated as a background constraint rather than
a control variable, and neither line asks how capability changes when the
budget shrinks and then returns.

\subsection{Memory Compression and Governance}

When memory must shrink, one family compresses it. \emph{MaRS}~\cite{Alqithami2025ARXIV}
summarizes episodes into reflections at a single tier; \emph{RecMem}~\cite{Dai2026ACL}
consolidates recurrently for long-running agents; \emph{Human-Inspired}
memory~\cite{Kerestecioglu2026ARXIV} demotes entries through a ladder of
representations, but only downward; and \emph{R$^3$Mem}~\cite{Wang2025ACL}
makes compression reversible, yet promotes opportunistically, with no
budget-driven schedule and no gate that verifies what returns. A parallel
theory line frames memory compaction as a rate--distortion trade-off over a
single compression event~\cite{Zou2026ARXIV,Colaco2026ARXIV,Zhang2026ARXIVb}.
In these systems, fidelity transitions are one-way or unscheduled; none treats
them as a two-way function of a moving byte budget.

Closest to ours, a governance family decides which entries deserve the bytes.
\emph{CURATOR}~\cite{Wu2026ARXIV} scores net value per byte and governs
retention under a hard footprint; \emph{A-MAC}~\cite{Zhang2026ICLRW} learns
what to admit at write time; \emph{AgeMem}~\cite{Yu2026ARXIV} learns a unified
long- and short-term management policy; \emph{Darwinian} memory~\cite{Mi2026ICML}
lets a training-free store regulate itself; and \emph{BudgetMem}~\cite{Alla2025ARXIV}
learns selective retention under explicit cost. These policies identify
valuable entries, and the strongest of them is our binary baseline. However,
each policy either keeps an entry verbatim or drops it. Selection decides which
entries survive a squeeze, but not what fidelity the survivors should occupy
or how evicted capability should return when the budget does.

\subsection{Elastic Serving and Evaluation}

Elasticity is routine in the serving stack below the memory
store~\cite{Fang2025JSAC,Fang2026GLOBECOM,Wu2025WASA,Wu2025RACS,Ding2025IPCCC,Ding2026ICNC,Wu2023MPE}.
vLLM~\cite{Kwon2023SOSP} manages the key--value cache as a paged, reclaimable
resource; sleep-time compute~\cite{Lin2025ARXIV} spends idle cycles preparing
context before queries arrive; and governed shared memory~\cite{Margalit2026ARXIV}
arbitrates a store written by many agents. These systems reclaim serving memory
or schedule maintenance compute, but they do not tie the fidelity of an
experiential store to the byte budget it receives. \sys{} composes with this
layer rather than replacing it.

Evaluation is beginning to measure lifecycle effects. \emph{MemoryAgentBench}~\cite{Hu2026ICLR}
measures accumulation and selective forgetting over incremental interaction;
\emph{Reclaim Eval}~\cite{Kwon2026ARXIV} finds that lossy retention can leave
an agent worse off than no memory at all; and \emph{Neuromem}~\cite{Zhang2026ARXIVa}
decomposes the streaming lifecycle of an external store into stages. Each
protocol measures at a fixed provision or at a single degradation point, so
capability that fails to return with the budget remains invisible. To our
knowledge, no prior system schedules fidelity movement in both directions under
a byte budget, and no prior protocol measures what an elastic budget does to
agent capability.

\section{Preliminaries and Memory Hysteresis Analysis}
\label{sec:hysteresis}

This section names and formalizes the failure mode: under an elastic budget,
capability lost during a squeeze does not automatically return when the
budget recovers. We model the memory plane and its elastic budgets
(Sec.~\ref{sec:model}), define and measure memory hysteresis
(Sec.~\ref{sec:phenomenon}), and show why deletion-based retention cannot
close the loop (Sec.~\ref{sec:impossibility}).

\subsection{System Model and Elastic Budgets}
\label{sec:model}

We consider a self-evolving LLM agent hosted in the cloud. Completed
episodes append experiential entries, such as trajectories, distilled
skills, and environment facts, to a memory store. At inference time, the
agent retrieves from this store to condition its policy. The store is a
metered resource, and its provisioned budget changes with cost control,
tenant load, and pricing events. We call this regime an \emph{elastic
budget}, and we call the storage side of the agent the memory plane.

\begin{table}[t]
\centering
\caption{Notation.}
\label{tab:notation}
\begin{tabular}{ll}
\toprule
Symbol & Meaning \\
\midrule
$m$, $\mathcal{D}_m$ & environment index; its query distribution \\
$q$, $g(q,M)$ & query; graded outcome in $[0,1]$ under store $M$ \\
$e$, $\mu(\cdot)$ & memory entry; byte measure \\
$M^s$, $W^s$ & store at stage $s$; entries written during stage $s$ \\
$b^s$, $B_0$ & budget fraction at stage $s$; provisioned bytes \\
$S$, $\Delta_b$ & number of stages; budget span $\max_s b^s-\min_s b^s$ \\
$\pi$, $\Pi_b$ & memory policy; binary-retention class \\
$C_m^s$ & capability on environment $m$ at stage $s$ \\
$\Gamma$, $H_m$ & capability--budget loop; hysteresis area \\
$D_m$ & residual deficit after the budget returns \\
$\bar{M}$, $R$ & entries evicted in the squeeze; re-observed subset \\
$u_e$, $p_e$ & terminal marginal utility; re-observation bound \\
\bottomrule
\end{tabular}
\end{table}

Let $e$ denote an entry and $\mu(e)$ its serialized size in bytes. For a
store $M$, write $\mu(M)=\sum_{e\in M}\mu(e)$. The agent serves an
environment $m$ with query distribution $\mathcal{D}_m$. Time is divided
into stages $s=1,\dots,S$. During stage $s$, the agent holds store $M^s$,
answers queries, and appends new writes $W^s$. A \emph{memory policy} $\pi$
then maps $(M^s, W^s)$ to the next store $M^{s+1}$ under the byte budget of
stage $s{+}1$. Capability is the expected graded outcome
$C_m^s=\mathbb{E}_{q\sim\mathcal{D}_m}\!\left[g(q,M^s)\right]$. The
provider-facing problem is to sustain this capability across the full budget
path:
\begin{equation}
\label{eq:problem}
\max_{\pi}\;\frac{1}{S}\sum_{s=1}^{S}
\mathbb{E}_{q\sim\mathcal{D}_m}\!\left[g\!\left(q,M^s\right)\right]
\quad\mathrm{s.t.}\;\; \mu\!\left(M^s\right)\le b^s B_0\;\;\forall s.
\end{equation}

Deployed policies fall into two classes. \emph{Binary retention} $\Pi_b$
keeps an entry verbatim or drops it, so $M^{s+1}\subseteq M^s\cup W^s$.
Recency eviction, expiry, and learned value governance~\cite{Wu2026ARXIV}
belong to this class. \emph{Irreversible ladders} may replace an entry by a
lossy transform, but transitions run in only one direction: once demoted, an
entry is never rebuilt~\cite{Kerestecioglu2026ARXIV}. Both classes share the
same structural property. A squeeze step is a projection, and what it
discards re-enters the store only if the input stream supplies it again.

\begin{definition}[Elastic budget schedule]
\label{def:schedule}
An elastic cycle is the budget path
$b=(b^1,\dots,b^S)=(1,\,0.75,\,0.5,\,0.25,\,0.5,\,0.75,\,1)$ with $S=7$:
a \emph{squeeze} phase $s\in\{1,\dots,4\}$ and a \emph{recovery} phase
$s\in\{4,\dots,7\}$, with span $\Delta_b=0.75$. All policies run under the
same path and the same byte accounting (iso-budget).
\end{definition}

\subsection{The Memory Hysteresis Phenomenon}
\label{sec:phenomenon}

Elastic budgets turn memory management into a round trip: the store is
squeezed to a quarter of its provision and then restored. If the squeeze were
reversible, capability would retrace the same path during recovery. We borrow
\emph{hysteresis} from magnetism for the failure of this retracing: the state
of the system depends on the path of the control variable, not only on its
current value. The cycle traces a closed path
$\Gamma=\{(b^s,C_m^s)\}_{s=1}^{S}$ in the capability--budget plane, and the
enclosed area measures how much recovery lags the squeeze.

\begin{figure}[t]
\centering
\includegraphics[width=0.9\columnwidth]{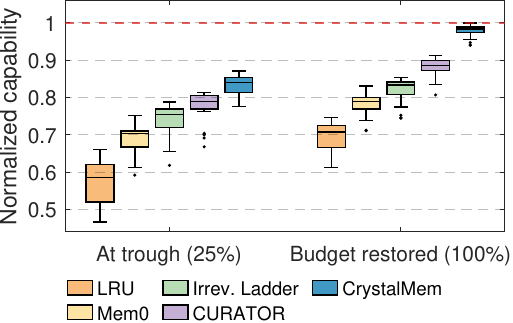}
\caption{\textbf{Memory hysteresis under an elastic budget cycle.} Capability
normalized to each method's pre-squeeze level (seven environments $\times$
five seeds). After the budget returns to 100\%, every baseline stays below
its pre-squeeze capability; \sys\ returns to the full-recovery line (red
dashed).}
\label{fig:recovery-box}
\end{figure}

\begin{definition}[Hysteresis area and residual deficit]
\label{def:hysteresis}
For a policy $\pi$ on environment $m$ under Definition~\ref{def:schedule},
the \emph{hysteresis area} is the normalized loop integral
\begin{equation}
\label{eq:area}
H_m(\pi)\;=\;\frac{1}{C_m^1\,\Delta_b}
\left|\;\oint_{\Gamma} C\,\mathrm{d}b\;\right|,
\end{equation}
estimated from the staged measurements by the trapezoid sum
\begin{equation}
\label{eq:est}
\hat{H}_m\;=\;\frac{1}{C_m^1\,\Delta_b}
\left|\;\sum_{s=1}^{S-1}\tfrac{1}{2}
\left(C_m^{s+1}+C_m^{s}\right)\left(b^{s+1}-b^{s}\right)\right|,
\end{equation}
and the \emph{residual deficit} is $D_m(\pi)=C_m^1-C_m^S$, the capability
still missing once the budget is fully restored.
\end{definition}

$H_m=0$ exactly when the cycle retraces itself. Both quantities are
normalized by the pre-squeeze capability $C_m^1$, making them comparable
across environments.

We measure one representative from each policy family under
Definition~\ref{def:schedule}: recency eviction (LRU), single-tier
summarization (Mem0~\cite{Chhikara2025ARXIV}), a four-level irreversible
ladder~\cite{Kerestecioglu2026ARXIV}, binary value governance
(CURATOR~\cite{Wu2026ARXIV}), and \sys. The testbed covers seven
environments: retrieval (SF, TTL~\cite{Hu2026ICLR}), long-context dialogue
(LoCoMo~\cite{Maharana2024ACL}, LongMemEval~\cite{Wu2025ICLR}), agentic
control (ALFWorld~\cite{Shridhar2021ICLR}, WebArena~\cite{Zhou2024ICLR}),
and synthetic drift (SynDrift), with a Qwen2.5-7B host~\cite{Yang2024ARXIV},
a bge-m3 retriever~\cite{Chen2024ACL}, and five seeds. Two regularities
emerge.

\begin{observation}[Ordered degradation at the trough]
\label{obs:trough}
At the squeeze trough ($b^4=0.25$), median capability retention orders by
policy family: $0.585$ (LRU) $<0.703$ (Mem0) $<0.755$ (irreversible ladder)
$<0.790$ (CURATOR) $<0.841$ (\sys); see
Fig.~\ref{fig:recovery-box}, left group.
\end{observation}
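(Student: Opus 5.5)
Observation~\ref{obs:trough} is an empirical claim, so I would establish it by a measurement protocol plus a statistical check of the ordering, not by derivation. I would also add a structural argument for why that ordering should be expected.

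\emph{Step 1: fix the estimand.} For each policy $\pi$, environment $m$, and seed, I would compute the trough retention ratio $\rho_m(\pi)=C_m^4/C_m^1$. Here $C_m^s$ is the empirical mean graded outcome $g(q,M^s)$ over a held-out query sample from $\mathcal{D}_m$, taken at the stages of Definition~\ref{def:schedule}. Every policy must run under the same budget path and the same byte accounting $\mu(M^s)\le b^sB_0$, so that the comparison is iso-budget. Normalizing by $C_m^1$ removes cross-environment scale differences, matching the normalization in Definition~\ref{def:hysteresis}. For each policy, pooling the seven environments and five seeds gives $35$ ratios. The reported number is their median, which is exactly the left group of Fig.~\ref{fig:recovery-box}.

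\emph{Step 2: verify the chain of inequalities.} I would compare the five medians directly. To show the ordering is not a sampling accident, I would then run a paired bootstrap over (environment, seed) cells for each adjacent pair in the chain, for example LRU vs.\ Mem0 and ladder vs.\ CURATOR. Pairing is valid because all policies see the same write stream $W^s$ under a given seed. The ordering holds if the bootstrap interval of each adjacent median difference excludes zero. As a secondary check, I would confirm that the ordering holds per environment for most environments, not only in the pooled median.

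\emph{Step 3: give a structural rationale for the ordering.} The ordering should follow from how much decision-relevant information each family keeps at $b^4=0.25$:
\begin{itemize}
\item recency eviction ignores value entirely;
\item single-tier summarization keeps coverage but at one coarse fidelity;
\item the irreversible ladder keeps a graded residue;
\item binary value governance keeps the highest-value entries verbatim;
\item \sys{} combines value ordering with intermediate fidelity states.
\end{itemize}
Each step therefore enlarges the feasible set of stores at a fixed byte cap, or selects better within it. This explains the monotone trend without claiming a theorem.

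The main obstacle is the narrow gap between the irreversible ladder ($0.755$) and CURATOR ($0.790$). These two families trade off differently across environments: graded residue helps in dialogue settings, while verbatim high-value entries help in agentic control. Their per-cell distributions may therefore overlap, and the adjacent difference may not be significant. To handle this, I would first try the paired bootstrap with environment-stratified resampling. If the interval still straddles zero, I would weaken the claim for that single link to an ordering of medians only, and report the per-environment breakdown.
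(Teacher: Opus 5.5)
Your proposal matches how the paper supports this observation: the paper runs one representative per family through the cycle of Definition~\ref{def:schedule} on seven environments and five seeds, normalizes trough capability by each method's own pre-squeeze level $C_m^1$, and reports the medians of the resulting $7\times 5$ ratios as the left group of Fig.~\ref{fig:recovery-box}. The paired bootstrap and the family-by-family rationale are additions the paper does not make, and the rationale should stay heuristic: on its edge testbed (Sec.~\ref{sec:deployment}) the paper reports \sys{} dipping deepest at the trough, so the ordering is a measured property of the main testbed rather than a structural consequence.
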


\begin{observation}[Restoration does not restore]
\label{obs:open}
After the budget returns to $100\%$, the four baselines plateau at
$0.71$--$0.89$ of their pre-squeeze capability and trace open loops with
$\hat{H}_m$ between $0.086$ and $0.207$; \sys\ returns to $0.984$ with
$\hat{H}_m=0.010$. See Fig.~\ref{fig:recovery-box}, right group, and
Fig.~\ref{fig:damage-bars}.
\end{observation}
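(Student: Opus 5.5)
Observation~\ref{obs:open} is an empirical claim, so I would establish it in two layers: a measurement that produces the reported numbers, and a structural argument that explains why the baseline loops stay open. For the measurement, I would run each of the five policies (LRU, Mem0, the irreversible ladder, CURATOR, \sys) under the path of Definition~\ref{def:schedule} on all seven environments with five seeds. All runs use iso-budget byte accounting, so that $\mu(M^s)\le b^s B_0$ at every stage. At each stage $s$ I would record $C_m^s$ on a fixed held-out query sample from $\mathcal{D}_m$. The restoration ratio $C_m^7/C_m^1$ then gives the plateau values; I would report the median over environment--seed pairs and expect $0.71$--$0.89$ for the baselines and $0.984$ for \sys. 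Applying the trapezoid estimator~\eqref{eq:est} to the same staged measurements gives $\hat{H}_m$. Because the squeeze and recovery legs traverse the same budget points $\{1,0.75,0.5,0.25\}$ in opposite directions, $\hat{H}_m$ reduces to
\[
\frac{0.25}{C_m^1\Delta_b}\Bigl|\textstyle\sum_{k}(\text{squeeze-leg value}-\text{recovery-leg value})\Bigr|,
\]
a weighted sum of the stagewise gaps between the two legs at matching budgets. This form makes it transparent that $\hat{H}_m\approx 0.01$ for \sys{} corresponds to near-retracing, and that $0.086$--$0.207$ corresponds to persistent gaps.

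For the structural explanation, I would argue directly from the defining property of the two deployed classes in Sec.~\ref{sec:model}. For $\pi\in\Pi_b$, the inclusion $M^{s+1}\subseteq M^s\cup W^s$ implies that any entry in the evicted set $\bar M$ can be present in $M^7$ only if it belongs to the re-observed subset $R\subseteq \bar M$. Irreversible ladders behave the same way, since a demoted entry is never rebuilt. Hence $M^7$ differs from the pre-squeeze store at least on $\bar M\setminus R$. Under a diminishing-returns assumption on $g$, I would bound the loss by a sum of terminal marginal utilities:
\[
D_m(\pi)\;\ge\;\sum_{e\in\bar M\setminus R}(1-p_e)\,u_e \;>\;0 ,
\]
where $p_e$ bounds the probability that $e$ is re-observed during recovery. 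This bound gives the qualitative shape of Observation~\ref{obs:open}. Because $b^4=0.25$ forces $\mu(\bar M)\ge 0.75\,B_0$ once the store was near full, $\bar M$ is large, and the only mechanism for return is re-observation within three recovery stages.

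The main obstacle is the final step: showing that the measured plateau is genuinely structural rather than a short-horizon artifact. If the recovery phase were simply too brief for re-observation to replenish $\bar M$, the gap would eventually close on its own. I would address this first with an extended-recovery control that holds $b=1$ for several extra stages. I expect baseline $C_m^s/C_m^1$ to stay flat there, because SynDrift and the agentic environments rarely re-emit the same trajectories, so $p_e$ is small. I would also confirm that the gap is not due to variance by reporting seed-level confidence intervals on both $D_m$ and $\hat{H}_m$.
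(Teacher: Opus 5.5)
Your measurement layer matches how the paper supports this observation. The observation is purely empirical, read off the five-policy, seven-environment, five-seed runs of Definition~\ref{def:schedule} in Figs.~\ref{fig:recovery-box} and~\ref{fig:damage-bars}. One correction to your reduction of \eqref{eq:est}: the trough term cancels and the endpoint gap carries weight $\tfrac12$, so $\hat H_m=\frac{1}{3C_m^1}\bigl|\tfrac12(C_m^1-C_m^7)+(C_m^2-C_m^6)+(C_m^3-C_m^5)\bigr|$.

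The gap is in your structural layer: the inequality $D_m(\pi)\ge\sum_{e\in\bar M\setminus R}(1-p_e)u_e>0$ does not follow. The terminal store $M^7=(M^1\setminus\bar M)\cup R\cup W$ is not simply $M^1$ with $\bar M\setminus R$ deleted. It also contains the recovery-phase writes $W$, which may include functional substitutes for evicted entries. Your diminishing-returns condition does give an additive removal bound, with $u_e$ the leave-one-out marginal at the store you remove from, and so does Assumption~\ref{asm:util}. But a removal-type hypothesis says nothing about what additions contribute. That is why Proposition~\ref{prop:bound} compares against the un-squeezed counterfactual $\tilde M=M^1\cup W$, so that passing from $\tilde M$ to $M^S$ is a pure removal, and it bounds $\tilde C-C_m^S$, not $C_m^1-C_m^S$.

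The two anchors differ in the paper's own data. Keep-all ends at $58.8$ in Table~\ref{tab:main}, whereas every budgeted row's pre-squeeze level (restored capability plus $\bar D$) is about $58.0$. What you can legitimately claim is $\mathbb{E}[D_m]\ge\sum_{e\in\bar M}(1-p_e)u_e-\mathbb{E}[\tilde C-C_m^1]$, which is positive only if the eviction floor outweighs the growth from new writes. Your version also mixes two bounds: the pathwise one (sum of $u_e$ over the realized $\bar M\setminus R$) and the expected one (sum of $(1-p_e)u_e$ over $\bar M$).

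Two further steps overreach. First, irreversible ladders and Mem0 do not ``behave the same way''. They keep demoted entries as lossy residues, so $\bar M$ captures only what they actually delete, and a removal bound does not charge degradation. You would need a graded analogue costing roughly $(1-r)u_e$ per entry held at utility fraction $r$; the paper also leaves this informal in Sec.~\ref{sec:impossibility}. Second, your extended-recovery control is a sensible addition the paper does not run, but small $p_e$ does not predict a flat plateau. The same $W$ effect can lift capability over extra stages at $b=1$ without any evicted entry being re-observed, so that experiment measures the growth term rather than re-observation.
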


The mechanism is simple. Evicted entries are gone, recovery-phase writes
replace only what the stream supplies again, and retrieval degrades around
the missing entries. The next subsection shows that, for deletion-based
retention, this behavior follows from a bound rather than an implementation
choice.

\subsection{Why Existing Memory Policies Cannot Close the Loop}
\label{sec:impossibility}

\begin{figure}[t]
\centering
\includegraphics[width=0.9\columnwidth]{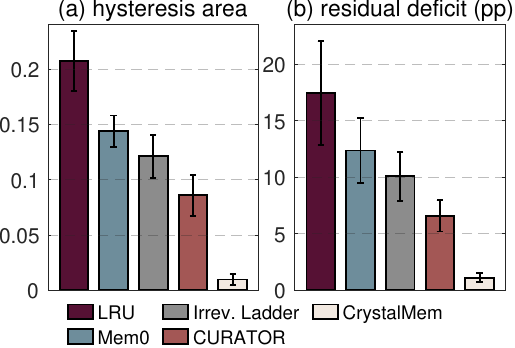}
\caption{\textbf{Two currencies of hysteresis damage}: (a) normalized
hysteresis-loop area and (b) residual capability deficit after budget
restoration (mean$\pm$std across seven environments).}
\label{fig:damage-bars}
\end{figure}

Two assumptions isolate what the argument needs.

\begin{assumption}[Bounded re-observation]
\label{asm:reobs}
Let $\bar{M}$ be the set of entries evicted during the squeeze phase. Each
$e\in\bar{M}$ is supplied again by the input stream during the recovery
phase with probability at most $p_e<1$.
\end{assumption}

Episodic streams rarely repeat: conversations move on, tasks change, and
drifting environments retire facts. Thus $p_e$ is small in practice.

\begin{assumption}[Additive terminal utility]
\label{asm:util}
There exist $u_e\ge 0$ such that removing any $A\subseteq M$ from a store
$M$ at the terminal stage lowers capability by at least
$\sum_{e\in A}u_e$; redundancy is absorbed into the definition of $u_e$.
\end{assumption}

\begin{proposition}[Deficit lower bound for binary retention]
\label{prop:bound}
Fix the schedule of Definition~\ref{def:schedule} and any
$\pi\in\Pi_b$. Let $\tilde{C}$ denote the terminal capability of the same
run with $b^s\equiv 1$. Under
Assumptions~\ref{asm:reobs}--\ref{asm:util},
\begin{equation}
\label{eq:bound}
\mathbb{E}\!\left[\tilde{C}-C_m^S\right]\;\ge\;
\sum_{e\in\bar{M}} u_e\left(1-p_e\right).
\end{equation}
\end{proposition}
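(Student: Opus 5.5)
We couple the budgeted run with the reference run at $b^s\equiv 1$ and then charge the capability gap to the evicted entries that never come back. Let $\tilde M^S$ be the terminal store of the reference run and $M^S$ that of the squeezed run. Since $\pi\in\Pi_b$, every transition satisfies $M^{s+1}\subseteq M^s\cup W^s$, so no step creates entries. Iterating this inclusion over $s=1,\dots,S-1$, every entry of $M^S$ either survived the squeeze or was written during the recovery phase.

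The random set $A$ of entries that are present in $\tilde M^S$ but absent from $M^S$ is the object to control. For each $e\in\bar M$, consider the event $E_e$ that the stream does not resupply $e$ during the recovery phase. On $E_e$, $e$ was dropped during the squeeze and, by the inclusion above, cannot re-enter. Hence $e\notin M^S$ on $E_e$. By Assumption~\ref{asm:reobs}, $\Pr(E_e)\ge 1-p_e$.

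For membership in $\tilde M^S$ we interpret $\bar M$ as the set of entries the squeeze forced out relative to the reference run. These are entries the reference run still holds at the terminal stage, which we take as implicit in the definition of $\bar M$. Under that reading, $\{e\in\bar M : E_e\}\subseteq A$. If $M^S\not\subseteq\tilde M^S$, we first pass to $M^S\cap\tilde M^S$. Any extra entries in $M^S$ have to be absorbed into the redundancy clause of Assumption~\ref{asm:util}, and this is the delicate point discussed below.

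Applying Assumption~\ref{asm:util} with $M=\tilde M^S$ and the removed set $A$ gives, pathwise,
\[
\tilde C-C_m^S\;\ge\;\sum_{e\in A}u_e\;\ge\;\sum_{e\in\bar M}u_e\,\mathbf{1}[E_e].
\]
The second inequality uses $u_e\ge 0$, so dropping the entries of $A$ outside the subset only lowers the sum. Taking expectations and using linearity, which needs no independence across entries, yields $\sum_{e\in\bar M}u_e\Pr(E_e)\ge\sum_{e\in\bar M}u_e(1-p_e)$. This is exactly \eqref{eq:bound}.

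The main obstacle is justifying the comparison of $\tilde M^S$ with $M^S$ as a removal. The squeezed run may retain different recovery-phase writes than the reference run, so $M^S$ need not be a subset of $\tilde M^S$. I would handle this by coupling the two runs on the same input stream. One then shows that under $b^s\equiv1$ the reference policy retains a superset, which holds for budget-monotone policies and is arguably part of the assumption that redundancy is absorbed into $u_e$. Alternatively, one states the bound for the store $\tilde M^S\setminus A$ and uses Assumption~\ref{asm:util}'s phrase ``removing any $A\subseteq M$'' to argue that replacing entries by recovery writes counts no better than removal. I would make this monotone coupling explicit as the first step of the proof.
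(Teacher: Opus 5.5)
Your proposal is correct and takes the same route as the paper's proof. Both condition on the squeeze and use the fact that a binary policy leaves no residue, so an evicted entry that is not resupplied is absent from $M^S$. Both then apply Assumption~\ref{asm:util} pathwise to the evicted-and-not-resupplied set $\bar M\setminus R$, dropping any other missing entries via $u_e\ge 0$, and take expectations using $\mathrm{P}(e\notin R)\ge 1-p_e$.

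The obstacle you flag is settled in the paper by simply positing the terminal stores $M^S=(M^1\setminus\bar M)\cup R\cup W$ and $\tilde M=M^1\cup W$. That is exactly the monotone coupling ($\bar M\subseteq\tilde M$, $M^S\subseteq\tilde M$) you propose to make explicit. The coupling, rather than your alternative ``replacement counts as removal'' reading, is the right fix, because Assumption~\ref{asm:util} says nothing about entries the squeezed run holds but the reference run does not.
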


\begin{proof}
Fix the stream and condition on the squeeze. Let $R\subseteq\bar{M}$ be the
evicted entries supplied again during the recovery phase;
Assumption~\ref{asm:reobs} gives $\mathrm{P}(e\in R)\le p_e$. Because $\pi$
is binary, an eviction leaves no residue. The terminal store is therefore
$M^S=\left(M^1\setminus\bar{M}\right)\cup R\cup W$, where $W$ collects
recovery-phase writes, while the un-squeezed counterfactual holds
$\tilde{M}=M^1\cup W$. Hence
$\tilde{M}\setminus M^S\supseteq\bar{M}\setminus R$, and
Assumption~\ref{asm:util} applied to $A=\bar{M}\setminus R$ yields
$\tilde{C}-C_m^S\ge\sum_{e\in\bar{M}\setminus R}u_e$. Taking expectation
over re-observation gives
$\mathbb{E}\!\left[\tilde{C}-C_m^S\right]
\ge\sum_{e\in\bar{M}}u_e\,\mathrm{P}(e\notin R)
\ge\sum_{e\in\bar{M}}u_e\left(1-p_e\right)$.
\end{proof}

The bound is not vacuous: the trough expels at least $1-b^4=75\%$ of
resident bytes, so the sum in \eqref{eq:bound} covers most of what the agent
has learned. We report deficits against the pre-squeeze anchor $C_m^1$.
Value governance operates within the bound, not beyond it. Choosing which
entries to evict reduces the sum but cannot make it vanish, and the
strongest binary baseline still forfeits $6.6\pm1.4$\,pp, compared with
$17.5\pm4.6$\,pp for recency eviction (Fig.~\ref{fig:damage-bars}(b)).

Lossy tiers relax the binary premise only partially. A single tier fixes one
exchange rate between bytes and fidelity; a $4\times$ squeeze exceeds that
rate, so the tier deletes what it cannot hold and inherits the bound
($12.4\pm2.9$\,pp for Mem0). A one-way ladder keeps residues at several
rates but never promotes them, so it serves degraded representations even
after the budget returns ($10.1\pm2.2$\,pp). The measurements separate the
two levers: four tiers instead of one buy $2.3$\,pp, while the reversible
stack of \sys\ buys another $9.0$\,pp ($1.1\pm0.4$\,pp). The proof uses only
the absence of a residue from which an entry can be rebuilt. Store such a
residue, and promotion by compute rather than re-observation becomes
possible. Reversibility is the first-order lever; tier count is second-order.

The analysis determines the design. The store must hold entries at several
fidelities and move them in both directions, using a reversible ladder rather
than one-way demotion. Because the sum in \eqref{eq:bound} is weighted by
$u_e$, demotion must track marginal utility, and entries that anchor others
must sink last. Because re-synthesis spends tokens and can hallucinate,
promotion must be budgeted and verified before it re-enters the store.
\sys\ is built around exactly these requirements.

\section{CrystalMem Design}
\label{sec:design}

\subsection{Overview}
\label{sec:overview}

The analysis gives three design requirements: a reversible ladder,
utility-aware demotion, and verified promotion. \sys{} implements them with a
single control loop around the store (Fig.~\ref{fig:framework}), executed once
per stage in three phases. \emph{Monitor} maintains per-entry value estimates.
\emph{Crystallize} reassigns fidelity states to satisfy the incoming byte
budget. \emph{Recrystallize} buys back capability when the budget expands. The
stage transition factorizes as
\begin{equation}
\label{eq:transition}
\begin{aligned}
p\!\left(M^{s+1}\mid M^{s}, b^{s+1}\right)=
&\Bigg[\prod_{e\in M^{s}}\varphi\!\left(f_e^{s+1}\mid f_e^{s},\eta_e\right)\Bigg]\\
\times\,&\Bigg[\prod_{e\in U^{s}}\mathcal{R}\!\left(e'\mid z_e, M^{s}\right)
\mathcal{V}\!\left(v_e\mid e', h_e\right)\Bigg],
\end{aligned}
\end{equation}
where the first factor crystallizes: an assignment kernel $\varphi$ moves each
entry along the fidelity ladder $f_e$ under a crystallization energy $\eta_e$.
The second factor recrystallizes: a generation operator $\mathcal{R}$ rebuilds
promoted entries from their residues $z_e$, and a verification operator
$\mathcal{V}$ checks the result against stored invariants $h_e$. Deletion does
not appear in \eqref{eq:transition}. It remains only as last-resort eviction of
bottom-rung residues.

\begin{figure*}[t]
\centering
\includegraphics[width=0.95\textwidth]{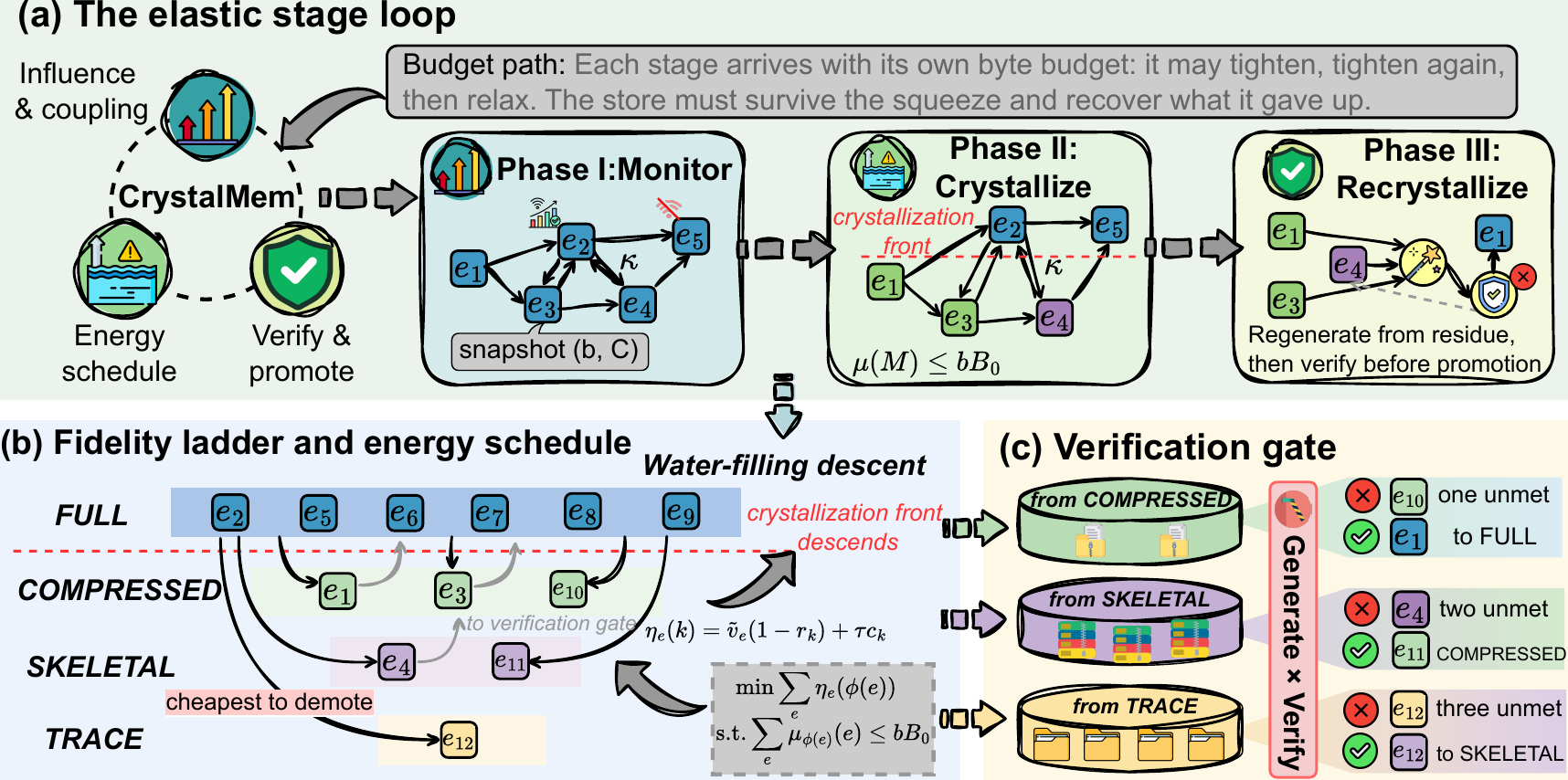}
\caption{\textbf{\sys\ framework.} (a)~Each stage runs one control loop:
\emph{Monitor} scores entries by advantage-weighted influence with dependency
coupling, \emph{Crystallize} fits the store to the incoming byte budget, and
\emph{Recrystallize} buys capability back when the budget relaxes. (b)~A
crystallization front descends the four-state fidelity ladder by water-filling
on energy, demoting the cheapest entries first. (c)~Recrystallization
regenerates promoted entries from their residues and admits a draft only after
verification confirms the invariants stored at demotion time.}
\label{fig:framework}
\end{figure*}

\subsection{Fidelity Ladder and Crystallization Energy}
\label{sec:ladder}

Observation~\ref{obs:open} traced open loops to one-way demotion. The ladder
replaces deletion with reversible state assignment.

Every entry occupies one of $K=4$ states, $f_1$ (\textsc{full}) through $f_4$
(\textsc{trace}), with \textsc{compressed} and \textsc{skeletal} between. At
state $k$, an entry costs $\mu_k(e)$ bytes and delivers a fraction $r_k$ of its
utility, with $\mu_1(e)>\dots>\mu_4(e)$ and $1=r_1>\dots>r_4$; regenerating
from state $k$ costs $c_k$ tokens, increasing in $k$. From \textsc{compressed}
down, each entry keeps a residue $z_e$, namely its state-$k$ representation,
and a set of invariants $h_e$: key spans, entity tuples, and outcome facts
distilled at demotion time. These invariants later anchor verification.

A demotion decision trades utility now against compute later. We price both
terms in one score,
\begin{equation}
\label{eq:energy}
\eta_e(k)\;=\;\tilde{v}_e\left(1-r_k\right)\;+\;\tau\, c_k ,
\end{equation}
the utility forgone by holding $e$ at state $k$ plus the anticipated cost of
buying it back, with $\tau$ converting regeneration tokens into capability
cost. Given the incoming budget, Crystallize solves the constrained assignment
\begin{equation}
\label{eq:schedule}
\begin{aligned}
\varphi^{\star}=\operatorname*{arg\,min}_{\varphi}\;
&\sum_{e\in M^{s}}\eta_e\!\left(\varphi(e)\right)\\
\mathrm{s.t.}\;\;
&\sum_{e\in M^{s}}\mu_{\varphi(e)}(e)\;\le\; b^{s+1}B_0 .
\end{aligned}
\end{equation}
The objective is separable, so the Lagrangian
$\sum_e \eta_e(\varphi(e))+\theta\,(\sum_e\mu_{\varphi(e)}(e)-b^{s+1}B_0)$
is minimized rung by rung: an entry stays at the deepest state whose marginal
price $\Delta\eta_e/\Delta\mu_e$ remains below the multiplier $\theta$ set by
the budget. \sys{} implements this step as a water-filling descent, demoting
one rung at a time in increasing order of marginal price until the store fits.
We call the moving boundary the crystallization front
(Fig.~\ref{fig:framework}(b)).

\subsection{Influence-Driven Assignment}
\label{sec:influence}

The bound \eqref{eq:bound} charges a policy for every unit of deleted utility,
so demotion order must track $u_e$. \sys{} estimates this value online.

Each retrieval attributes the query outcome back to the entries it surfaced,
baselined against recent outcomes:
\begin{equation}
\label{eq:influence}
I_e\;\leftarrow\;\gamma\, I_e+\left(1-\gamma\right) w_e\,
\frac{g\!\left(q,M\right)-\operatorname*{mean}_{j\in Q}\, g_j}
{\operatorname*{std}_{j\in Q}\, g_j}\,,
\end{equation}
where $w_e$ is the normalized retrieval weight of $e$ in answering $q$, $Q$ is
a sliding batch of recent queries, and $\gamma$ is the averaging horizon.
Entries that repeatedly appear in above-average outcomes accumulate influence;
stale or decorative entries decay toward zero. The estimator is white-box: it
uses only retrieval logs and graded outcomes, both already produced by the
serving path.

Entries are not independent: a skill cites facts, and a trajectory can ground a
distilled rule. \sys{} maintains a dependency graph over co-retrieval and
reference edges and lets influence flow across it,
\begin{equation}
\label{eq:coupling}
\tilde{v}_e\;=\;I_e+\lambda\!\sum_{e'\in N(e)}
\frac{x\!\left(e,e'\right)}{\sum_{j\in N(e)} x\!\left(e,j\right)}\,
I_{e'} ,
\end{equation}
where $N(e)$ is the neighborhood of $e$, $x(e,e')$ counts co-retrievals, and
$\lambda$ sets the coupling strength. Hub entries take value from the entries
they anchor and therefore sink last, matching the demotion order required by
the analysis. As a hard guard, an entry is not demoted below \textsc{skeletal}
while a \textsc{full} neighbor still cites its invariants.

\subsection{Recrystallization}
\label{sec:recrystallize}

Proposition~\ref{prop:bound} depends on evictions leaving no residue.
Recrystallization removes that condition.

When the budget expands, Recrystallize selects a promotion set
\begin{equation}
\label{eq:promote}
\begin{aligned}
U^{s}=\operatorname*{arg\,max}_{U\subseteq M^{s}}\;
&\sum_{e\in U}\tilde{v}_e\,\Delta_e\\
\mathrm{s.t.}\;\;\sum_{e\in U} c_e\le\rho\, n^{s},
\;\;&\mu\!\left(M^{s}\right)+\sum_{e\in U}\Delta\mu_e\le b^{s+1}B_0 ,
\end{aligned}
\end{equation}
where $\Delta_e$ is the utility regained by lifting $e$ one rung, $c_e$ is its
regeneration cost, $n^{s}$ is the serving-token count of the stage, and $\rho$
is the compute cap (default $10\%$). The two constraints make the trade-off
explicit: promotion may spend at most a fixed share of serving compute and may
never exceed the byte budget. Greedy selection by density
$\tilde{v}_e\Delta_e/c_e$ gives the standard knapsack approximation for
\eqref{eq:promote} and runs in idle windows between queries.

A selected residue is rebuilt by conditioning the host model on $z_e$ and its
graph neighborhood, $e'\sim\mathcal{R}(\cdot\mid z_e,M^{s})$. The draft is
admitted only if the verification operator confirms it against the stored
invariants, $\mathcal{V}(e',h_e)=1$ (Fig.~\ref{fig:framework}(c)): every key
span, entity tuple, and outcome fact recorded at demotion time must be entailed
by $e'$. A rejected draft leaves
the entry at its current rung, so a failed promotion cannot degrade the store.
Hallucinated reconstructions are filtered before they enter retrieval. Admission
therefore converts serving tokens into recovered capability, the
storage-for-compute exchange the ladder was built to support.

Three properties locate the guarantees of the loop; only the last one is
idealized.

\begin{proposition}[Feasibility invariance]
\label{prop:feas}
Under Algorithm~\ref{alg:crystalmem}, $\mu(M^{s})\le b^{s}B_0$ at every
stage $s$.
\end{proposition}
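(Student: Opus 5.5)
The plan is induction on the stage index $s$, checking that each of the three phases of the per-stage control loop either reduces bytes or is explicitly capped by the incoming budget $b^{s+1}B_0$. The base case is stage $s=1$, where $b^1=1$ and the initial store is provisioned within $B_0$, so $\mu(M^1)\le B_0$ holds by the admission rule for the provisioned store. For the inductive step, I assume $\mu(M^s)\le b^sB_0$ and look at the passage from $M^s\cup W^s$ to $M^{s+1}$. The argument orders the operations as Algorithm~\ref{alg:crystalmem} executes them: Monitor, then Crystallize, then Recrystallize.

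\textbf{Monitor.} This phase only updates the scores $I_e$ and $\tilde v_e$ through \eqref{eq:influence}--\eqref{eq:coupling}. It changes no representation, so it changes no bytes.

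\textbf{Crystallize.} This phase solves \eqref{eq:schedule} by water-filling descent, demoting one rung at a time until $\sum_e\mu_{\varphi(e)}(e)\le b^{s+1}B_0$. Two facts are needed.
\begin{itemize}
\item Each demotion strictly decreases bytes, because $\mu_1(e)>\dots>\mu_4(e)$. The descent therefore terminates, since there are finitely many rungs.
\item If all entries reach \textsc{trace} and the budget is still exceeded, the algorithm falls through to last-resort eviction of bottom-rung residues. Eviction drives bytes to $0$ in the worst case, so a feasible point always exists.
\end{itemize}
The dependency guard, which keeps an entry at or above \textsc{skeletal} while a \textsc{full} neighbor cites it, could block a demotion. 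I would handle this by noting that demoting the citing \textsc{full} neighbor first releases the guard, so the guard only reorders moves and never makes the descent get stuck. The stopping condition of the loop then gives feasibility after Crystallize, independent of the inductive hypothesis. The hypothesis is used only to ensure the loop starts from a well-defined finite store.

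\textbf{Recrystallize.} Every candidate set $U$ in \eqref{eq:promote} satisfies the explicit constraint $\mu(M^s)+\sum_{e\in U}\Delta\mu_e\le b^{s+1}B_0$, where $\mu(M^s)$ is read as the post-Crystallize byte count. Greedy density selection must preserve this. I would argue that the greedy rule admits an item only when both residual capacities, compute and bytes, still accommodate it, so every prefix it builds is feasible.

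Verification can only lower the byte count relative to the planned set. A rejected draft leaves its entry at the current rung, contributing $0$ instead of $\Delta\mu_e\ge0$. The one gap is an admitted draft $e'$ whose size exceeds the planned $\mu_{k-1}(e)$. I would close this by charging $\Delta\mu_e$ from the actual draft size at admission, or by truncating the draft to the rung's byte allotment, so that the admitted store still satisfies the constraint.

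The main obstacle is pinning down this correspondence between the optimization model and the algorithm as executed. Both the guard interaction and the draft-size accounting hold only if Algorithm~\ref{alg:crystalmem} enforces the byte check at admission, not just at selection. I would state the proof as relying on that check, and treat feasibility as holding by construction at each phase boundary. Combining the three phases gives $\mu(M^{s+1})\le b^{s+1}B_0$, which completes the induction.
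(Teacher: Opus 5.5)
Your proposal is correct and follows the paper's own argument: the Crystallize loop strictly decreases $\mu(M)$, by demotion and then by eviction at \textsc{trace}, until $\mu(M)\le b^{s+1}B_0$, and Recrystallize admits promotions only within the byte headroom of \eqref{eq:promote}. Three of your additions fill in details the paper's three-line proof leaves implicit: the dependency guard cannot deadlock because a \textsc{full} entry can always drop to \textsc{compressed}, rejected drafts add no bytes, and $\mu(M^{s})$ in \eqref{eq:promote} should be read as the post-Crystallize count. The induction hypothesis is never actually used, and the draft-size caveat does not arise in the paper's model, where an entry's bytes are fixed by its rung through $\mu_k(e)$.
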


\begin{proof}
The Crystallize loop demotes, and at the bottom rung evicts, until the
constraint holds. Each step strictly decreases $\mu(M)$, so the loop terminates
with a feasible store. Promotions are admitted only within the byte headroom in
\eqref{eq:promote}.
\end{proof}

\begin{proposition}[Near-optimality of the water-filling schedule]
\label{prop:greedy}
Suppose each entry's marginal price $\Delta\eta_e/\Delta\mu_e$ is
nondecreasing along the ladder. Then the Phase II schedule is optimal for
the fractional relaxation of \eqref{eq:schedule}, and its energy exceeds
the integral optimum by at most
$\max_e\left[\eta_e(K)-\eta_e(1)\right]$, the span of one entry.
\end{proposition}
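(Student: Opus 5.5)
The plan is to treat \eqref{eq:schedule} as a multiple-choice knapsack and reduce it to the classical fractional-knapsack/greedy argument. For each entry $e$, list the ladder as incremental \emph{items}. Moving from rung $k$ to $k{+}1$ saves $\Delta\mu_{e,k}=\mu_k(e)-\mu_{k+1}(e)>0$ bytes and costs $\Delta\eta_{e,k}=\eta_e(k{+}1)-\eta_e(k)$ energy. Every entry starts at $f_1$ with baseline energy $\eta_e(1)$. A ladder assignment $\varphi$ then corresponds to choosing, per entry, a \emph{prefix} of its increments. The fractional relaxation lets each increment be taken to a fraction $x_{e,k}\in[0,1]$, so the problem becomes: minimize $\sum_e\eta_e(1)+\sum_{e,k}x_{e,k}\Delta\eta_{e,k}$ subject to $\sum_{e,k}x_{e,k}\Delta\mu_{e,k}\ge \mu_1(M^s)-b^{s+1}B_0$. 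The prefix requirement is dropped in the relaxation, and the key step below shows it is restored automatically.

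First I will establish optimality of the fractional greedy. This is a single-constraint covering LP. By the standard exchange argument, or equivalently complementary slackness with the multiplier $\theta$ from the Lagrangian in Sec.~\ref{sec:ladder}, an optimum fills increments in nondecreasing order of the price $\Delta\eta_{e,k}/\Delta\mu_{e,k}$. It stops at the first increment that reaches the byte target and takes that increment fractionally.

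Here the hypothesis does the work. Because each entry's marginal price is nondecreasing along the ladder, sorting all increments by price respects each entry's ladder order, with ties broken by rung index. The greedy LP solution is therefore prefix-consistent: it never takes rung $k{+}1\to k{+}2$ of an entry before $k\to k{+}1$. Hence it is a genuine (fractional) ladder schedule, and it coincides with the Phase II water-filling descent, which demotes one rung at a time in increasing marginal price. This identification is where I expect the main care to be needed. I must check that the descent's tie-breaking and its stopping rule ("until the store fits") match the LP's sorted order. The descent completes the last increment integrally rather than fractionally, so Phase II equals the fractional optimum except on that single boundary increment.

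For the integrality gap, let $\mathrm{OPT}_{\mathrm{frac}}\le\mathrm{OPT}_{\mathrm{int}}$. The Phase II schedule agrees with the fractional optimum on every increment except one partially filled increment of a single entry $e^\ast$, which Phase II rounds up to a full rung. Its extra energy is therefore at most $(1-x)\Delta\eta_{e^\ast,k}\le\eta_{e^\ast}(K)-\eta_{e^\ast}(1)$. The last inequality assumes increments are nonnegative (energy nondecreasing down the ladder), so a single increment is bounded by the entry's full span. If some $\Delta\eta$ could be negative, such increments have negative price, are taken first and fully in both solutions, and so do not affect the boundary term. This gives energy $\le\mathrm{OPT}_{\mathrm{frac}}+\max_e[\eta_e(K)-\eta_e(1)]\le\mathrm{OPT}_{\mathrm{int}}+\max_e[\eta_e(K)-\eta_e(1)]$.

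The remaining subtleties are the bottom-rung eviction and the hub guard of Sec.~\ref{sec:influence}. I would handle eviction as a fifth increment with its own price and assume it satisfies the monotonicity hypothesis; alternatively, I would state the result for instances where eviction is not triggered. The hard guard I would note as outside the relaxation's scope.
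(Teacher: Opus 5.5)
Your proposal is correct and follows the paper's own route: a greedy fractional-knapsack argument that matches the relaxation up to the single boundary rung move, a rounding step that charges at most one entry's span, and the bound $\mathrm{OPT}_{\mathrm{frac}}\le\mathrm{OPT}_{\mathrm{int}}$; you also write out the incremental-item formulation and the prefix-consistency step that show where the nondecreasing-price hypothesis is used. The one thing to drop is your patch for negative $\Delta\eta$: Phase II stops as soon as $\mu(M)\le b^{s+1}B_0$, so any negative-price increments still unused at that point are taken by the relaxation and by the integral optimum but not by Phase II, and the resulting gap can exceed any single span (e.g.\ an already-feasible store holding many entries with very negative $\tilde v_e$), so nonnegative increments (for instance $\tilde v_e\ge 0$) must stay an explicit hypothesis, as the paper's proof tacitly assumes.
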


\begin{proof}
Under nondecreasing prices, taking rung moves greedily by price matches the
fractional optimum up to the single boundary move where the byte constraint
becomes tight, by the classical fractional-knapsack argument. Rounding that
move integrally charges at most one entry's span, and the fractional optimum
lower-bounds the integral one.
\end{proof}

The promotion pass in \eqref{eq:promote} admits the same argument per binding
cap, losing at most one boundary entry for each cap.

\begin{proposition}[Loop closure under exact regeneration]
\label{prop:closure}
Suppose no bottom-rung eviction occurs, $\mathcal{R}$ restores each promoted
residue to its full utility, and $\mathcal{V}$ has no false rejections. If
the caps in \eqref{eq:promote} admit all candidates during the recovery
phase, then $\mathbb{E}\!\left[D_m\right]\to 0$ and the loop of
Definition~\ref{def:hysteresis} closes.
\end{proposition}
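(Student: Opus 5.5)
The plan is to mirror the decomposition in the proof of Proposition~\ref{prop:bound} and show that, under the stated idealizations, every term that made the deficit positive there now vanishes. Compare the terminal store $M^S$ of \sys{} with the un-squeezed counterfactual $\tilde{M}=M^1\cup W$, entry by entry. Because no bottom-rung eviction occurs, the eviction set of Proposition~\ref{prop:bound} is empty: every entry of $M^1$ is still present in $M^S$, at some rung $f_e^S\in\{f_1,\dots,f_4\}$, with a residue $z_e$ and invariants $h_e$ stored at demotion time. Recovery-phase writes $W$ enter both stores identically, since the crystallization schedule acts only on fidelity states and not on admission. The difference $\tilde{M}$ versus $M^S$ therefore reduces to entries of $M^1$ sitting below \textsc{full} at stage $S$.

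The central step is to show that each such entry is lifted back to $f_1$ by the end of the recovery phase $s\in\{4,\dots,7\}$. Along that phase the budget returns to $b^7=1$, so the byte headroom in \eqref{eq:promote} at the last transition is at least the bytes freed during the squeeze. By hypothesis the caps admit all candidates, so every demoted entry is placed in $U^s$ at each recovery stage until it reaches \textsc{full}. I would argue by induction on the rung gap $k_e-1$: each admission lifts one rung, and at most three rungs are needed. The recovery phase offers three transitions (stages $4\to5\to6\to7$). The gap therefore closes, provided that promotions in one stage are allowed to chain, or that the ``admit all candidates'' hypothesis is read per stage.

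Next I would apply the exactness assumptions. Because $\mathcal{R}$ restores full utility and $\mathcal{V}$ never falsely rejects, every drafted promotion is admitted and contributes $r_1=1$ of its utility. The rebuilt entry is therefore utility-equivalent to the original, and $g(q,M^S)=g(q,\tilde{M})$ in expectation over $q\sim\mathcal{D}_m$ and the randomness of $\mathcal{R}$. This yields $\mathbb{E}[\tilde{C}-C_m^S]=0$. To pass to $D_m=C_m^1-C_m^S$, I would use that $W$ has the same effect on both runs, so $\mathbb{E}[D_m]=\mathbb{E}[C_m^1-\tilde{C}]$. This term is the drift of the unsqueezed run itself, and it is zero (or attributable to the environment rather than the policy) in the idealized setting. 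I would state the ``$\to 0$'' as the limit in which regeneration error and residual rung gaps vanish.

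For loop closure, I would note that $C_m^S=C_m^1$ together with $b^S=b^1$ makes $\Gamma$ a closed curve with matching endpoints. The trapezoid estimator \eqref{eq:est} then vanishes exactly when squeeze and recovery values coincide at equal budgets. The symmetric schedule of Definition~\ref{def:schedule} pairs stages $(2,6)$ and $(3,5)$. Applying the same argument at intermediate stages (all entries restored to their squeeze-phase rungs under equal budgets) gives $C_m^{8-s}=C_m^s$, which makes $\hat{H}_m=0$. The main obstacle I expect is this intermediate-stage matching and the rung-count argument: promotion is greedy and value-ordered, so I would need the water-filling front of Proposition~\ref{prop:greedy} to be path-independent given the budget. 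I would first try to establish this by noting that the multiplier $\theta$ depends only on $b^{s}$ and the values $\tilde{v}_e$, and that the $\tilde{v}_e$ are held fixed under the idealization.
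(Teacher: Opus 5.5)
Your endpoint argument is the paper's: no bottom-rung eviction leaves every squeezed entry with a residue, and the admit-all and exactness hypotheses lift each one back to \textsc{full}. The terminal store then matches the counterfactual $\tilde{M}$ of Proposition~\ref{prop:bound}, giving $\mathbb{E}[\tilde{C}-C_m^S]=0$. Your rung count (three recovery transitions, at most three rungs, admit-all read per stage) makes explicit what the paper's three-line proof leaves implicit. For the anchor you need not posit $\mathbb{E}[C_m^1-\tilde{C}]=0$. Capability depends only on the store, so Assumption~\ref{asm:util} applied to $\tilde{M}=M^1\cup W$ with $A=W$ gives $C_m^1\le\tilde{C}$, hence $\mathbb{E}[D_m]\le 0$. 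That one-sided bound is the meaningful content of ``the deficit vanishes'', since fresh writes may lift capability above $C_m^1$.

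The step that would fail is your route to zero area. On the schedule of Definition~\ref{def:schedule}, \eqref{eq:est} expands to
\[
\hat{H}_m=\frac{\bigl|\,2\,(C_m^5-C_m^3)+2\,(C_m^6-C_m^2)-D_m\,\bigr|}{6\,C_m^1},
\]
so endpoint closure removes only the last term. As you say, you need $C_m^5=C_m^3$ and $C_m^6=C_m^2$. Path-independence of the water-filling front does not deliver this. The multiplier $\theta$ belongs to the demotion pass, which is essentially idle while the budget rises; the recovery stores are produced by \eqref{eq:promote}. Under the admit-all hypothesis, the stage-5 store is just the trough store with every sub-\textsc{full} entry lifted one rung. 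An entry that was \textsc{compressed} at stage~3 and \textsc{trace} at the trough therefore returns only to \textsc{skeletal}. An entry that stayed \textsc{skeletal} through the trough comes back to \textsc{compressed}. The stage-5 assignment thus generally differs from the stage-3 water-filling solution at the same budget, and nothing equates their capabilities. Nor are the $\tilde{v}_e$ fixed, since \eqref{eq:influence} updates on every retrieval.

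The paper's proof has the same hole: it asserts without argument that the recovery branch retraces the squeeze branch. You have two options. You can claim only endpoint closure, i.e., the bound on $\mathbb{E}[D_m]$ above, which your argument supports. Alternatively, add a hypothesis that each recovery stage restores exactly the squeeze-phase assignment at the same budget, so that promotion reverses the crystallization front. Under that hypothesis $C_m^{8-s}=C_m^s$ and $\hat{H}_m=0$ follow directly.
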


\begin{proof}
Every squeezed entry retains its residue. Under the stated hypotheses, each
entry is lifted back to $f_1$ as the budget returns, so the terminal store
matches the counterfactual store of Proposition~\ref{prop:bound} on the evicted
mass. The deficit term vanishes, the recovery branch retraces the squeeze
branch, and the enclosed area of Definition~\ref{def:hysteresis} is zero.
\end{proof}

Both greedy passes sort once and sweep once, $O(|M|\log|M|)$ per stage, with
no training and no external solver. The loop is cheap enough to run inside the
serving path.

\subsection{System Implementation}
\label{sec:impl}

\sys{} runs as a sidecar beside the serving stack~\cite{Kwon2023SOSP}. Monitor
hooks the retrieval path and updates \eqref{eq:influence} inline; Crystallize
fires on budget events; and Recrystallize batches its generation calls into
idle windows, so foreground queries do not wait on promotion. The store keeps,
for each entry, the current rung, the residue $z_e$, the invariants $h_e$, and
the graph edges of \eqref{eq:coupling}. Under multi-tenancy, the same schedule
runs over the pooled budget with tenant-weighted values and per-tenant
capability floors. On edge hosts, a small model runs Monitor and Crystallize
locally and delegates $\mathcal{R}$ to the hub.

\begin{algorithm}[t]
\caption{\sys{} Elastic-Stage Lifecycle
(\colorbox{phaseI}{Monitor}, \colorbox{phaseII}{Crystallize}, and
\colorbox{phaseIII}{Recrystallize})}
\label{alg:crystalmem}
\textbf{Input:} store $M^{s}$, incoming budget $b^{s+1}$, serving tokens
$n^{s}$, compute cap $\rho$, rates $\gamma,\lambda,\tau$
\begin{algorithmic}[1]
\STATE \textit{Phase I: Monitor} \Comment{budget and query telemetry}
\colorbox{phaseI}{
\parbox{0.82\columnwidth}{
\STATE on a budget event $b^{s+1}\neq b^{s}$, snapshot the loop point
$\left(b^{s}, C_m^{s}\right)$
\STATE on each retrieval, update the influence $I_e$ by \eqref{eq:influence}
\STATE refresh the coupled values $\tilde{v}_e$ by \eqref{eq:coupling}
}}

\STATE \textit{Phase II: Crystallize} \Comment{solve \eqref{eq:schedule}}
\colorbox{phaseII}{
\parbox{0.82\columnwidth}{
\WHILE{$\mu(M) > b^{s+1}B_0$}
\STATE pop the cheapest move
$e=\operatorname*{arg\,min}_{e}\,\Delta\eta_e/\Delta\mu_e$; raise the
multiplier $\theta$ to that price
\STATE demote $e$ one rung, distilling $(z_e,h_e)$ at its first demotion;
if already \textsc{trace}, evict $e$
\ENDWHILE
}}

\STATE \textit{Phase III: Recrystallize} \Comment{verified promotion}
\colorbox{phaseIII}{
\parbox{0.82\columnwidth}{
\STATE select the promotion set $U$ by \eqref{eq:promote}
\FORALL{$e\in U$}
\STATE draw $e'\sim\mathcal{R}(\cdot\mid z_e,M)$; admit one rung up iff
$\mathcal{V}(e',h_e)=1$
\STATE charge $c_e$ to the recrystallization ledger; stop once
$\rho\, n^{s}$ is spent
\ENDFOR
}}
\RETURN $M^{s+1}$
\end{algorithmic}
\end{algorithm}

Taken together, \sys{} turns retention into reversible state assignment.
Advantage-weighted influence and a dependency kernel value each entry; a
crystallization-energy schedule demotes entries along a four-state ladder to
satisfy the budget path; and a generation--verification operator pair buys
capability back under explicit compute and byte caps, admitting only
reconstructions that pass invariant checks. Deletion becomes the last resort
instead of the default.

\section{Experiments}
\label{sec:experiments}

\subsection{Experimental Setup}
\label{sec:setup}

\textit{1) Environments and Protocol:}
We evaluate on seven environments in four workload families: retrieval under
streaming writes (the SF and TTL tracks of~\cite{Hu2026ICLR}), long-context
dialogue (LoCoMo~\cite{Maharana2024ACL} and LongMemEval~\cite{Wu2025ICLR},
abbreviated LME), agentic control (ALFWorld~\cite{Shridhar2021ICLR} and
WebArena~\cite{Zhou2024ICLR}), and SynDrift, a synthetic protocol that
retires facts mid-stream (Drift). Every policy runs the elastic cycle of
Definition~\ref{def:schedule} under the same byte accounting. Capability is
measured at each of the seven stages. Unless a table states otherwise,
reported numbers aggregate five seeds; the cost frontier, the three
largest backbone scales, the multi-tenant study, and the deployment study
use three.

\textit{2) Comparison Methods:}
We compare \sys{} with fifteen baselines, grouped as in
Table~\ref{tab:main}. \textit{(i)}~Reference bounds: keep-all, which
ingests the same streams without a budget and bounds capability from above;
LRU; expiry; and random drop. \textit{(ii)}~Single-tier summarization:
Mem0~\cite{Chhikara2025ARXIV}, MemGPT~\cite{Packer2024COLM}, and
MaRS~\cite{Alqithami2025ARXIV}. \textit{(iii)}~Hierarchical and ladder
policies: A-MEM~\cite{Xu2025NEURIPS}, MemoryOS~\cite{Kang2025EMNLP}, the
Human-Inspired irreversible ladder~\cite{Kerestecioglu2026ARXIV}, and
R$^3$Mem~\cite{Wang2025ACL}, which stores reversible residues but promotes
them without a schedule. \textit{(iv)}~Value governance:
CURATOR~\cite{Wu2026ARXIV}, A-MAC~\cite{Zhang2026ICLRW},
RecMem~\cite{Dai2026ACL}, and AgeMem~\cite{Yu2026ARXIV} with its learned
admission policy frozen. Each baseline enforces the stage budget through
its native retention knob. \sys{} runs in two configurations throughout:
full, with dependency coupling on, and lite, with coupling off.

\textit{3) Evaluation Metrics:}
We report the three loop metrics of Definition~\ref{def:hysteresis}:
restored capability $C_m^S$, hysteresis area $\hat{H}_m$ estimated
by~\eqref{eq:est}, and residual deficit $D_m$. Capability is scored on a
$0$--$100$ scale, so deficits read directly in points (pp). Two meters
cover serving cost: the recrystallization ratio, defined as promotion
tokens divided by stream inference tokens, and the scheduler share of stage
wall-clock time.

\textit{4) Implementation Details:}
All policies serve from vLLM~\cite{Kwon2023SOSP} on A100 and H100 GPUs,
with a Qwen2.5-7B-Instruct host~\cite{Yang2024ARXIV} at temperature $0$
and a bge-m3 retriever~\cite{Chen2024ACL}. Five additional backbone scales
are evaluated in Sec.~\ref{sec:ablation}.

\subsection{Iso-Budget Main Comparison}
\label{sec:main}

\begin{table*}[t]
\centering
\caption{Iso-budget comparison over one elastic cycle (Definition~\ref{def:schedule}): restored capability $C_m^S$ (mean$\pm$std, five seeds), its seven-environment average, mean hysteresis area $\bar{H}$ \eqref{eq:est}, and mean residual deficit $\bar{D}$ (pp). The best value in each baseline family is \underline{underlined}; the gain of \sys{} over each row appears in the \colorbox{DeltaBg}{$\Delta$ column}. \textcolor{gray}{\textit{Gray italic}} is the unconstrained keep-all bound, not ranked.}
\label{tab:main}
\footnotesize
\setlength{\tabcolsep}{2.8pt}
\renewcommand{\arraystretch}{1.12}
\begin{tabular}{l|cc|cc|cc|c|ccc|c}
\toprule
& \multicolumn{2}{c|}{\cellcolor{SearchBg}\textbf{Retrieval}} & \multicolumn{2}{c|}{\cellcolor{MathBg}\textbf{Dialogue}} & \multicolumn{2}{c|}{\cellcolor{AgenticBg}\textbf{Agentic}} & \multicolumn{1}{c|}{\cellcolor{ScienceBg}\textbf{Synthetic}} & \multicolumn{3}{c|}{} & \\
\cmidrule(lr){2-3}\cmidrule(lr){4-5}\cmidrule(lr){6-7}\cmidrule(lr){8-8}
\textbf{Method} & \textbf{SF} & \textbf{TTL} & \textbf{LoCoMo} & \textbf{LME} & \textbf{ALF} & \textbf{Web} & \textbf{Drift} & \textbf{Avg}$\,\uparrow$ & $\bar{H}\,\downarrow$ & $\bar{D}\,\downarrow$ & \textbf{$\Delta$} \\
\midrule
{\color{gray}\textit{Keep-all}} & {\color{gray}\textit{66.8}\std{0.5}} & {\color{gray}\textit{59.1}\std{0.6}} & {\color{gray}\textit{58.3}\std{0.2}} & {\color{gray}\textit{55.5}\std{0.7}} & {\color{gray}\textit{68.9}\std{0.8}} & {\color{gray}\textit{30.3}\std{0.7}} & {\color{gray}\textit{72.5}\std{0.8}} & {\color{gray}\textit{58.8}} & --- & --- & {\color{gray}\textit{$-$1.3}} \\
LRU & 45.9\std{0.5} & 41.9\std{0.7} & 43.0\std{0.6} & 39.0\std{0.4} & 48.0\std{0.4} & 19.0\std{0.3} & 46.6\std{0.5} & 40.5 & 0.209 & 17.6 & \mygreen{68}\gain{16.9} \\
Expiry & \underline{49.3\std{0.7}} & \underline{44.8\std{0.1}} & \underline{43.3\std{0.7}} & \underline{40.8\std{0.2}} & \underline{51.5\std{0.8}} & \underline{19.8\std{0.8}} & \underline{51.2\std{0.4}} & \underline{43.0} & \underline{0.178} & \underline{15.1} & \mygreen{58}\gain{14.5} \\
Random drop & 46.2\std{0.5} & 41.7\std{0.6} & 40.4\std{0.4} & 38.4\std{0.3} & 48.4\std{0.7} & 17.9\std{0.4} & 47.4\std{0.2} & 40.0 & 0.216 & 17.9 & \mygreen{70}\gain{17.4} \\
\midrule
Mem0~\cite{Chhikara2025ARXIV}~\venue{arXiv'25} & 52.0\std{0.6} & \underline{47.5\std{0.6}} & 46.0\std{0.3} & 43.0\std{0.3} & 53.7\std{0.3} & 22.0\std{0.7} & 55.2\std{0.6} & 45.6 & 0.145 & 12.3 & \mygreen{47}\gain{11.8} \\
MemGPT~\cite{Packer2024COLM}~\venue{COLM'24} & \underline{52.8\std{0.5}} & 47.1\std{0.7} & \underline{46.4\std{1.1}} & \underline{43.7\std{1.1}} & \underline{54.8\std{0.9}} & \underline{22.0\std{0.8}} & \underline{55.4\std{0.6}} & \underline{46.0} & \underline{0.144} & \underline{12.1} & \mygreen{46}\gain{11.4} \\
MaRS~\cite{Alqithami2025ARXIV}~\venue{arXiv'25} & 51.4\std{0.7} & 46.5\std{0.6} & 45.0\std{0.5} & 42.2\std{1.0} & 53.3\std{0.3} & 21.1\std{0.3} & 53.6\std{0.6} & 44.7 & 0.157 & 13.3 & \mygreen{51}\gain{12.7} \\
\midrule
A-MEM~\cite{Xu2025NEURIPS}~\venue{NeurIPS'25} & 54.7\std{0.3} & 48.5\std{0.2} & 47.7\std{0.6} & 44.8\std{0.4} & 56.7\std{0.4} & 22.9\std{0.8} & 57.7\std{0.4} & 47.6 & 0.121 & 10.5 & \mygreen{39}\gain{9.9} \\
MemoryOS~\cite{Kang2025EMNLP}~\venue{EMNLP'25} & 56.0\std{0.3} & 49.7\std{0.4} & 48.9\std{0.2} & 45.7\std{0.2} & 58.0\std{1.4} & 23.9\std{0.3} & 59.2\std{0.7} & 48.8 & 0.109 & 9.2 & \mygreen{35}\gain{8.7} \\
Human-Inspired~\cite{Kerestecioglu2026ARXIV}~\venue{arXiv'26} & 54.6\std{0.4} & 48.7\std{0.5} & 48.0\std{0.8} & 45.0\std{0.4} & 57.1\std{0.5} & 22.6\std{0.5} & 58.1\std{0.4} & 47.7 & 0.122 & 10.1 & \mygreen{39}\gain{9.7} \\
R$^3$Mem~\cite{Wang2025ACL}~\venue{ACL'25} & \underline{60.0\std{0.6}} & \underline{53.5\std{0.4}} & \underline{53.1\std{0.6}} & \underline{49.6\std{0.3}} & \underline{62.5\std{0.8}} & \underline{26.3\std{0.8}} & \underline{64.6\std{0.3}} & \underline{52.8} & \underline{0.044} & \underline{5.2} & \mygreen{18}\gain{4.6} \\
\midrule
CURATOR~\cite{Wu2026ARXIV}~\venue{arXiv'26} & \underline{58.7\std{0.4}} & \underline{52.3\std{0.4}} & 51.1\std{0.6} & \underline{48.4\std{0.3}} & \underline{60.5\std{0.6}} & \underline{25.4\std{0.2}} & \underline{62.9\std{0.6}} & \underline{51.3} & \underline{0.087} & \underline{6.6} & \mygreen{24}\gain{6.1} \\
A-MAC~\cite{Zhang2026ICLRW}~\venue{ICLR-W'26} & 58.0\std{0.3} & 51.6\std{0.6} & \underline{51.2\std{0.6}} & 47.7\std{0.4} & 60.0\std{0.8} & 25.1\std{0.5} & 62.4\std{0.5} & 50.8 & 0.088 & 7.2 & \mygreen{26}\gain{6.6} \\
RecMem~\cite{Dai2026ACL}~\venue{ACL'26} & 57.6\std{0.7} & 51.3\std{0.4} & 50.4\std{0.5} & 47.6\std{0.1} & 59.5\std{0.4} & 24.9\std{0.8} & 61.7\std{0.3} & 50.4 & 0.093 & 7.6 & \mygreen{28}\gain{7.0} \\
AgeMem~\cite{Yu2026ARXIV}~\venue{arXiv'26} & 56.8\std{1.0} & 50.9\std{0.8} & 50.2\std{0.3} & 46.9\std{0.4} & 58.9\std{0.5} & 24.3\std{0.8} & 60.7\std{0.3} & 49.8 & 0.102 & 8.2 & \mygreen{30}\gain{7.6} \\
\midrule
\sys-lite & 65.0\std{0.6} & 56.7\std{0.4} & 56.4\std{0.9} & 52.7\std{0.7} & 67.4\std{0.6} & 29.0\std{0.3} & 70.9\std{1.2} & 56.9 & 0.013 & 1.1 & \mygreen{3}\gain{0.6} \\
\rowcolor{crystalblue!10}
\textbf{\sys{} (Ours)} & \textbf{65.1\std{0.4}} & \textbf{57.4\std{0.7}} & \textbf{56.6\std{0.5}} & \textbf{53.2\std{1.2}} & \textbf{68.6\std{0.5}} & \textbf{29.8\std{0.3}} & \textbf{71.3\std{1.0}} & \textbf{57.4} & \textbf{0.013} & \textbf{0.5} & -- \\
\bottomrule
\end{tabular}
\end{table*}

\begin{table}[t]
\centering
\caption{Iso-capability budget (three seeds). Ref is CURATOR, the strongest budgeted baseline at full provision; $b^{\star}$ is the smallest budget at which \sys{} matches it. Cells at or above Ref in \textbf{bold}; the median $b^{\star}$ in \best{red}.}
\label{tab:cost}
\footnotesize
\setlength{\tabcolsep}{5.5pt}
\renewcommand{\arraystretch}{1.12}
\begin{tabular}{l|c|ccc|c}
\toprule
& & \multicolumn{3}{c|}{\cellcolor{ScienceBg}\textbf{\sys{} at budget}} & \\
\cmidrule(lr){3-5}
\textbf{Env.} & \textbf{Ref@100} & \textbf{40\%} & \textbf{50\%} & \textbf{60\%} & \textbf{$b^{\star}$} \\
\midrule
SF & 66.3 & 66.2 & \textbf{67.0} & \textbf{67.1} & 50\% \\
TTL & 58.6 & \textbf{58.6} & \textbf{59.3} & \textbf{59.5} & 40\% \\
LoCoMo & 58.0 & 57.5 & \textbf{58.4} & \textbf{58.3} & 50\% \\
LME & 55.3 & 54.9 & \textbf{55.4} & \textbf{55.6} & 50\% \\
ALF & 68.5 & 68.1 & \textbf{68.7} & \textbf{69.1} & 50\% \\
Web & 29.9 & 29.3 & \textbf{30.6} & \textbf{30.7} & 50\% \\
Drift & 72.1 & 72.0 & \textbf{72.5} & \textbf{72.8} & 50\% \\
\midrule
\rowcolor{crystalblue!10}
\multicolumn{5}{l|}{\textbf{Median across environments}} & \best{50\%} \\
\bottomrule
\end{tabular}
\end{table}

\begin{figure*}[t]
\centering
\includegraphics[width=0.99\textwidth]{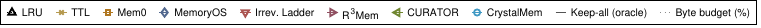}\\[-1.2em]
\subfloat[LoCoMo]{\includegraphics[width=0.245\textwidth]{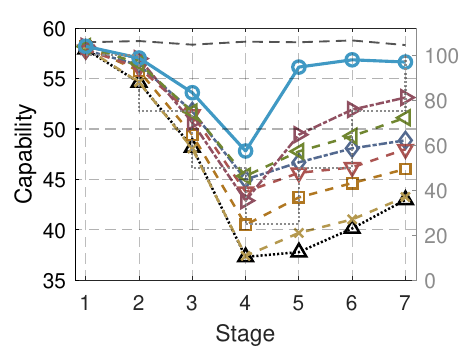}}\hfill
\subfloat[ALFWorld]{\includegraphics[width=0.245\textwidth]{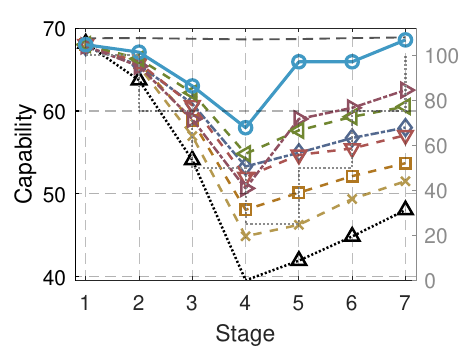}}\hfill
\subfloat[Loops (LoCoMo)]{\includegraphics[width=0.245\textwidth]{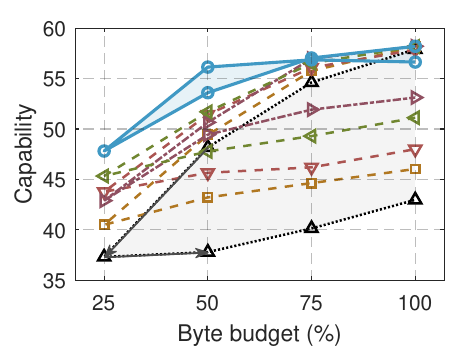}}\hfill
\subfloat[Recovered share]{\includegraphics[width=0.245\textwidth]{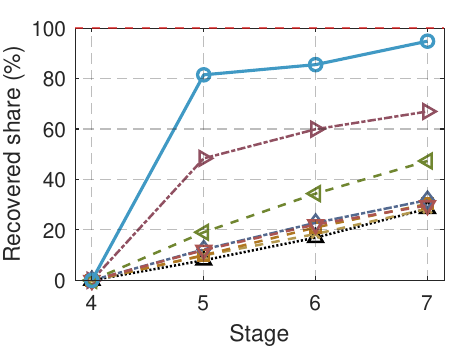}}
\caption{Elasticity dynamics under the squeeze--recover schedule (seed
means). (a),~(b)~Capability trajectories on LoCoMo and ALFWorld, with the
byte budget overlaid as gray stairs (right axis). (c)~Budget--capability
loops on LoCoMo: arrows give traversal direction, and shaded regions mark
the areas enclosed by LRU and by \sys. (d)~Share of trough loss recovered
at each recovery stage, averaged over all seven environments; the red
dashed line marks full recovery.}
\label{fig:dynamics}
\end{figure*}

Table~\ref{tab:main} gives the iso-budget comparison. \sys{} has the
highest restored capability on all seven environments, averaging $57.4$
against $52.8$ for R$^3$Mem, the strongest baseline in this comparison. The
margin averages $+4.6$\,pp and never falls below $+3.4$\,pp (WebArena). The loop
metrics separate the methods more sharply than raw capability: mean
hysteresis area falls to $0.013$ against a baseline range of
$0.044$--$0.216$, and mean residual deficit falls to $0.5$\,pp. The
remaining headroom is small, since keep-all, which never discards entries,
ends only $1.3$\,pp higher. The lite row isolates dependency coupling:
$0.6$\,pp on average, concentrated on ALFWorld ($1.1$\,pp) and WebArena
($0.8$\,pp), which Sec.~\ref{sec:ablation} revisits.

Figure~\ref{fig:dynamics} shows the cycles behind these aggregates. Every
method follows the squeeze downward. During recovery, the baselines flatten,
while \sys{} rejoins the keep-all trace, closes the budget--capability
loop, and recovers most of its trough loss one stage after the budget turns.

The deficit column shows the cost of each missing lever. R$^3$Mem keeps
residues at several fidelities but promotes them without an energy schedule
or a verification gate; reversibility without scheduling forfeits
$4.7$\,pp. The strongest binary policy, CURATOR, chooses evictions by
learned value but leaves no residue to rebuild from. The floor of
Proposition~\ref{prop:bound} then applies, and selection without
reversibility forfeits $6.1$\,pp. The Human-Inspired ladder keeps several
fidelities but moves entries only downward. At $10.1$\,pp, it trails even
binary governance, because demoted entries continue serving degraded
representations after the budget returns. Closing the loop needs both
levers: residues to rebuild from and a schedule that decides when to spend
them.

Two meters bound the serving cost of these gains. Recrystallization uses
$7.3\%$ of stream inference tokens on average and $9.1\%$ at worst, within
the token cap of~\eqref{eq:promote}. The scheduler adds $3.4\%$ of stage
wall-clock time on average and $3.8\%$ at most. Across the
$4{,}165$-measurement grid, no method exceeds its stage budget, and \sys{}
maintains the feasibility invariant of Proposition~\ref{prop:feas} at
every stage.

Table~\ref{tab:cost} restates the frontier in budget terms. The median
$b^{\star}$ across environments is $50\%$, with six environments matched
at half provision and TTL at $40\%$. Even a $40\%$ budget leaves at most a
$0.7$\,pp shortfall, and the curve is flat above the match point:
capability at a $60\%$ budget stays within $0.3$\,pp of full provision on
every environment. Over the eight-budget grid, \sys{} is never below LRU,
Mem0, or CURATOR at any operating point, so the ranking does not depend on
where a provider sits on the frontier. The provisioning rule is simple:
run the memory plane at half its nominal budget and lose nothing relative
to the best fixed-provision baseline. Figure~\ref{fig:frontier} draws the
frontier: \sys{} meets the keep-all reference at half budget, and the
per-environment curves keep it above the strongest binary baseline at
every operating point.

\begin{figure}[t]
\centering
\subfloat[Frontier]{\includegraphics[width=0.49\columnwidth]{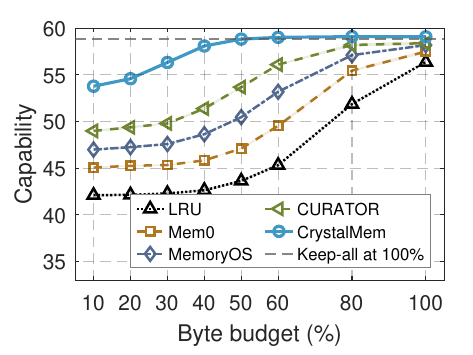}}\hfill
\subfloat[Per environment]{\includegraphics[width=0.49\columnwidth]{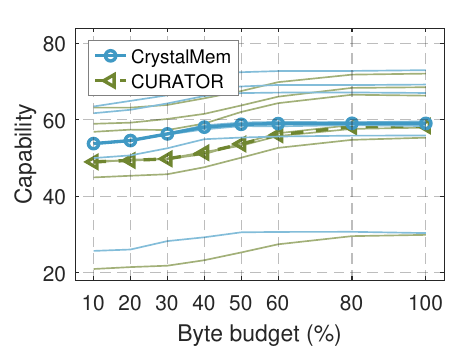}}
\caption{Cost frontier under constant provisioning (seed means).
(a)~Seven-environment mean capability against the byte budget; the dashed
line is keep-all at full provision; shaded bands give seed variation.
(b)~Per-environment curves (thin) and their means (thick) for \sys{} and
the strongest binary baseline.}
\label{fig:frontier}
\end{figure}

\subsection{Mechanism and Theoretical Validation}
\label{sec:mechanism}

\begin{figure*}[t]
\centering
\subfloat[Fidelity composition]{\includegraphics[width=0.245\textwidth]{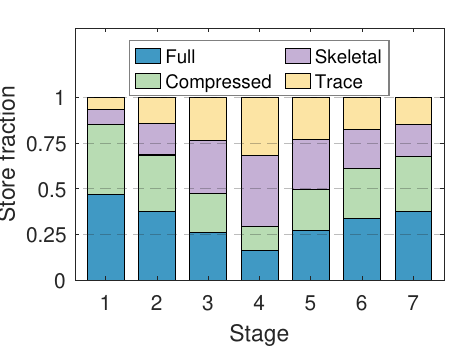}}\hfill
\subfloat[Influence estimator]{\includegraphics[width=0.245\textwidth]{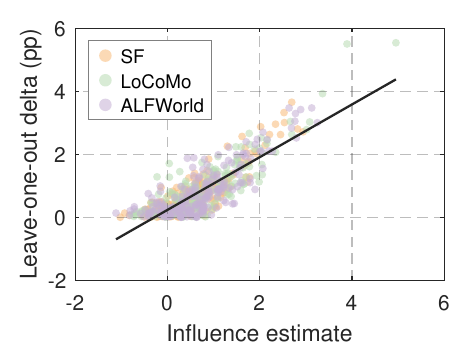}}\hfill
\subfloat[Verification gate]{\includegraphics[width=0.245\textwidth]{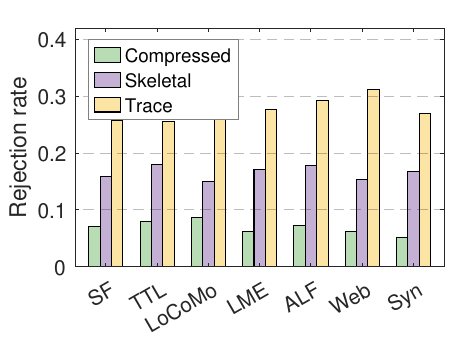}}\hfill
\subfloat[Deficit bound]{\includegraphics[width=0.245\textwidth]{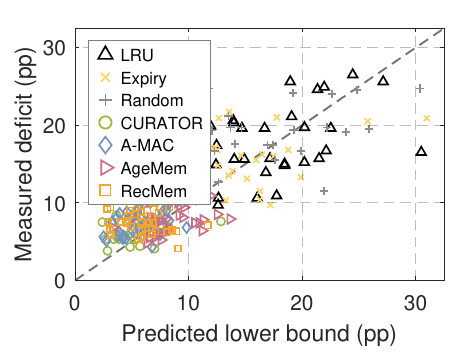}}
\caption{Mechanism validation on live cycles (seed means). (a)~Fidelity
composition of the store per stage. (b)~Influence estimate against
leave-one-out ground truth, with a linear fit. (c)~Verification rejection
rate by origin state and environment. (d)~Measured deficit against the
estimated bound of Proposition~\ref{prop:bound} for the seven binary
policies; points above the gray diagonal satisfy the bound.}
\label{fig:mech}
\end{figure*}

Figure~\ref{fig:mech} traces the machinery of
Algorithm~\ref{alg:crystalmem} on live cycles. Panel~(a) tracks the
store's fidelity composition. At the start, $85\%$ of entries occupy the
upper two states, \textsc{full} and \textsc{compressed}. The squeeze pushes
all but $29\%$ down to \textsc{skeletal} and \textsc{trace}; verified
promotion raises the upper share back to $68\%$ by the final stage. The
store moves in both directions through fidelity space.

Panel~(b) checks the influence estimator \eqref{eq:influence} against
leave-one-out ground truth on $200$ entries in each of three environments.
Spearman correlation reaches $0.75$--$0.76$, and $75$--$80\%$ of the true
top-decile entries land in the estimated top decile. The
advantage-weighted signal is accurate enough to order demotions.

Panel~(c) splits recrystallization by origin state. The gate rejects
$6.9\%$ of reconstructions from \textsc{compressed}, $16.4\%$ from
\textsc{skeletal}, and $27.5\%$ from \textsc{trace}. Rejection rises with
demotion depth, as sparser residues leave more room for invariant
violations. Admitted reconstructions recover about $90\%$ of retrieval
utility regardless of origin, and their downstream contribution remains
positive ($+0.9$ to $+1.0$\,pp). Sec.~\ref{sec:ablation} shows the cost of
removing the gate.

Panel~(d) tests Proposition~\ref{prop:bound} where it applies. Across the
seven binary policies, measured deficits track the estimated bound with
correlation $0.76$ and exceed it on two thirds of the $245$
environment--seed points. \sys{} breaks
the premise rather than the bound: it keeps residues, and its deficit
($0.5$\,pp, Table~\ref{tab:main}) sits an order of magnitude below every
binary floor. Together with the zero budget violations of
Sec.~\ref{sec:main} and the closed loops of Table~\ref{tab:main}, these
measurements ground Propositions~\ref{prop:feas} and~\ref{prop:closure} in
the observed cycles.

\subsection{Ablations and Sensitivity}
\label{sec:ablation}

\begin{table}[t]
\centering
\caption{Ablation over one elastic cycle (seven environments, five seeds): seven-environment averages, with $\Delta$ the drop in Avg relative to full \sys{} (default: water-filling schedule, advantage-weighted influence, $\rho=10\%$, $K{=}4$ states).}
\label{tab:ablation}
\footnotesize
\setlength{\tabcolsep}{4.5pt}
\renewcommand{\arraystretch}{1.15}
\begin{tabular}{l|ccc|c}
\toprule
\textbf{Variant} & \textbf{Avg}$\,\uparrow$ & $\bar{H}\,\downarrow$ & $\bar{D}\,\downarrow$ & \textbf{$\Delta$} \\
\midrule
\rowcolor{crystalblue!18}
\sys{} (full) & \best{57.3\std{0.3}} & 0.015 & 0.7 & -- \\
\midrule
Schedule $\to$ size-greedy & 55.6\std{0.2} & 0.015 & 2.2 & \mygreen{7}\gain{1.8} \\
\rowcolor{crystalblue!5}
Schedule $\to$ random & 54.3\std{0.2} & 0.033 & 3.7 & \mygreen{12}\gain{3.0} \\
\midrule
Influence $\to$ recency & 56.0\std{0.2} & 0.012 & 1.9 & \mygreen{5}\gain{1.3} \\
\rowcolor{crystalblue!5}
Influence $\to$ random & 53.7\std{0.2} & 0.035 & 4.3 & \mygreen{15}\gain{3.7} \\
\midrule
w/o dependency coupling & 56.9\std{0.2} & 0.011 & 1.0 & \mygreen{3}\gain{0.4} \\
\rowcolor{crystalblue!5}
w/o verification gate & 54.6\std{0.2} & 0.025 & 3.3 & \mygreen{11}\gain{2.8} \\
\midrule
Recryst.\ cap $0\%$ & 51.4\std{0.2} & 0.071 & 6.5 & \mygreen{24}\gain{5.9} \\
\rowcolor{crystalblue!5}
Recryst.\ cap $5\%$ & 55.9\std{0.2} & 0.010 & 2.1 & \mygreen{6}\gain{1.4} \\
Recryst.\ cap $20\%$ & 57.1\std{0.1} & 0.013 & 0.9 & \mygreen{3}\gain{0.3} \\
\midrule
Ladder $K{=}2$ & 54.3\std{0.2} & 0.024 & 3.5 & \mygreen{12}\gain{3.0} \\
\rowcolor{crystalblue!5}
Ladder $K{=}3$ & 56.6\std{0.3} & 0.011 & 1.5 & \mygreen{3}\gain{0.8} \\
Ladder $K{=}6$ & 57.2\std{0.1} & 0.015 & 0.7 & \mygreen{3}\gain{0.1} \\
\bottomrule
\end{tabular}
\end{table}

\begin{table}[t]
\centering
\caption{Backbone sweep (averages over SF, ALFWorld, and SynDrift; five seeds up to 14B, three above): restored capability and hysteresis area per model scale. $\Delta$ is the \sys{} gain over CURATOR at the same backbone.}
\label{tab:backbones}
\footnotesize
\setlength{\tabcolsep}{3.6pt}
\renewcommand{\arraystretch}{1.15}
\begin{tabular}{l|cc|cc|cc|c}
\toprule
& \multicolumn{2}{c|}{\textbf{LRU}} & \multicolumn{2}{c|}{\textbf{CURATOR}} & \multicolumn{2}{c|}{\textbf{\sys}} & \\
\cmidrule(lr){2-3}\cmidrule(lr){4-5}\cmidrule(lr){6-7}
\textbf{Backbone} & \textbf{Avg} & $\bar{H}$ & \textbf{Avg} & $\bar{H}$ & \textbf{Avg} & $\bar{H}$ & \textbf{$\Delta$} \\
\midrule
Qwen2.5-3B & 39.0 & 0.240 & 52.6 & 0.093 & \textbf{60.3} & \textbf{0.012} & \mygreen{31}\gain{7.6} \\
Qwen2.5-7B & 46.9 & 0.213 & 60.7 & 0.080 & \textbf{68.4} & \textbf{0.014} & \mygreen{31}\gain{7.7} \\
Qwen2.5-14B & 50.0 & 0.203 & 63.8 & 0.079 & \textbf{71.3} & \textbf{0.013} & \mygreen{30}\gain{7.5} \\
Qwen2.5-32B & 51.7 & 0.196 & 65.9 & 0.076 & \textbf{73.2} & \textbf{0.010} & \mygreen{29}\gain{7.3} \\
Qwen2.5-72B & 53.8 & 0.195 & 67.7 & 0.075 & \textbf{75.6} & \textbf{0.011} & \mygreen{32}\gain{7.9} \\
Llama-3.3-70B & 52.9 & 0.197 & 66.8 & 0.076 & \textbf{74.1} & \textbf{0.012} & \mygreen{29}\gain{7.3} \\
\bottomrule
\end{tabular}
\end{table}

\begin{figure}[t]
\centering
\subfloat[Compute cap]{\includegraphics[width=0.49\columnwidth]{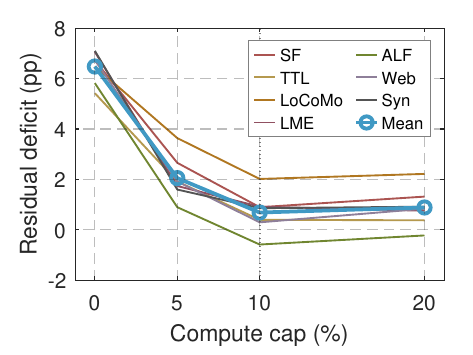}}\hfill
\subfloat[Ladder depth]{\includegraphics[width=0.49\columnwidth]{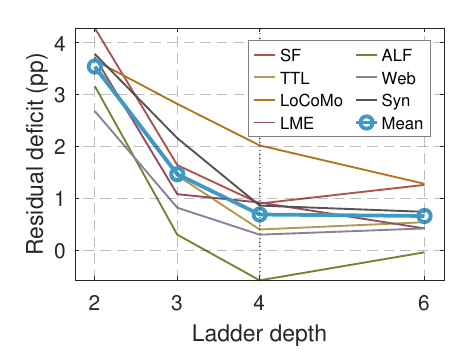}}
\caption{Design-parameter sweeps (seed means). Thin curves: single
environments; thick: the seven-environment mean; dotted vertical: the
deployed default.}
\label{fig:sens}
\end{figure}

Table~\ref{tab:ablation} removes one design choice at a time; every variant
loses capability, so no component is redundant. The full row re-runs the
Table~\ref{tab:main} configuration, and its average lands within
$0.1$\,pp. Replacing
the water-filling schedule \eqref{eq:schedule} with size-greedy demotion
costs $1.8$\,pp, and random demotion costs $3.0$\,pp: which bytes are
reclaimed matters as much as how many. The demotion ordering relies on the
influence signal~\eqref{eq:influence}; recency costs $1.3$\,pp, and random
scoring costs $3.7$\,pp. Removing the verification gate costs $2.8$\,pp
even though every reconstruction is admitted, because unverified
regenerations pollute retrieval. This is the failure mode rejected by the
gate in Fig.~\ref{fig:mech}(c). Dependency coupling is the smallest term,
$0.4$\,pp on the seven-environment average, concentrated on the two
agentic environments, matching the lite gap in Table~\ref{tab:main}.

The recrystallization cap prices loop closure directly. At $\rho=0$, the
deficit regresses to $6.5$\,pp, within noise of the strongest binary floor
($6.6$\,pp, Table~\ref{tab:main}); a reversible ladder without promotion
compute is a slower way to forget. At $\rho=5\%$, the deficit falls to
$2.1$\,pp, and raising $\rho$ from the default $10\%$ to $20\%$ buys only
$0.3$\,pp. Ladder depth shows the same knee: two states forfeit
$3.0$\,pp, three forfeit $0.8$\,pp, and six change nothing ($0.1$\,pp), so
the four-state ladder sits at the point of vanishing marginal return.
Figure~\ref{fig:sens} plots both sweeps per environment. The knee at the
deployed cap and the plateau past four states appear in every environment,
not only in the mean.

Table~\ref{tab:backbones} repeats the cycle on six backbones spanning
Qwen2.5~\cite{Yang2024ARXIV} from 3B to 72B and
Llama-3.3-70B~\cite{Grattafiori2024ARXIV}, on the SF, ALFWorld, and
SynDrift subset. Hysteresis is not a 7B artifact: LRU's area stays between
$0.195$ and $0.240$ at every scale. Scale also does not close the loop by
itself. The strongest binary baseline still forfeits $7.5$--$7.9$\,pp on
every backbone. \sys{} holds its area at or below $0.014$ and its deficit
at or below $0.5$\,pp on all six scales, with a steady $+7.3$ to
$+7.9$\,pp margin over it. The 3B row supports the deployment premise of
Sec.~\ref{sec:deployment}: the margin survives at edge scale.

The design does not depend on the seven environments used for tuning. With
every hyper-parameter frozen, one cycle on
BabyAI~\cite{ChevalierBoisvert2019ICLR}, an agentic stream outside that
set, leaves \sys{} with a $1.5$\,pp residual deficit and a $0.011$ loop
area, against $5.5$\,pp and $0.069$ for the strongest binary baseline and
$17.2$\,pp and $0.199$ for recency eviction. The family ordering is
unchanged.

\subsection{Fidelity-SLO Multi-Tenant Crystallization}
\label{sec:multitenant}

\begin{table}[t]
\centering
\caption{Multi-tenant crystallization (three seeds): share of windows that violate the per-tenant floor, by tenant count. $\Delta$ is the reduction \sys-SLO delivers at $N{=}8$, computed on unrounded rates; ms is the allocator decision time at $N{=}4$. Pooled Jain fairness is at least $0.999$ for every policy and tenant count.}
\label{tab:tenant}
\footnotesize
\setlength{\tabcolsep}{4.8pt}
\renewcommand{\arraystretch}{1.15}
\begin{tabular}{l|ccc|c|c}
\toprule
& \multicolumn{3}{c|}{\textbf{SLO violations}$\,\downarrow$} & & \\
\cmidrule(lr){2-4}
\textbf{Policy} & $N{=}2$ & $N{=}4$ & $N{=}8$ & \textbf{$\Delta$} & \textbf{ms} \\
\midrule
LRU-fair & 31\% & 38\% & 62\% & \mygreen{83}\gain{56} & 3 \\
CURATOR-fair & 17\% & 31\% & 47\% & \mygreen{60}\gain{40} & 4 \\
\sys-fair & 8\% & 14\% & 26\% & \mygreen{30}\gain{20} & 8 \\
\rowcolor{crystalblue!10}
\textbf{\sys-SLO (Ours)} & \textbf{3\%} & \textbf{3\%} & \textbf{6\%} & -- & \textbf{24} \\
\bottomrule
\end{tabular}
\end{table}

We next share one memory pool among $N$ tenants with independent streams
drawn from the seven environments, on a single vLLM instance. The pool is
provisioned at $N\times 40\%$ of $B_0$, squeezed to half for sustained
windows, and then released, with six windows per run. A tenant's service
level objective (SLO) holds in a window if its capability stays within
$2$\,pp of its solo capability at the same budget. Table~\ref{tab:tenant}
compares fair splitting of each store with the \sys{} allocator, which
steers the pool toward tenants nearest their floor.

Crystallization improves sharing before any allocation intelligence is
added. Fair splitting of a crystallized store violates $13.9\%$ of windows
at $N{=}4$, compared with $30.6\%$ for CURATOR and $37.5\%$ for LRU,
because a squeezed tenant keeps residues to recover from. Every policy in
this experiment is byte-fair (Jain $\ge 0.999$); the violations come from
what the bytes can rebuild, not from how they are split. The SLO-aware
allocator drops violations to $2.8\%$, with decisions in $24$\,ms on
average and $42$\,ms maximum, well under one percent of a window. Density
widens the separation: the gap over CURATOR-fair grows from $14$ to $28$
to $40$\,pp as $N$ goes $2\to 8$, and at $N{=}8$ recency eviction violates
$62\%$ of windows. Sharing at density is survivable only for stores that
can rebuild what the pool takes away.

\subsection{Real-System Deployment}
\label{sec:deployment}

\begin{table}[t]
\centering
\caption{Edge deployment on the Jetson testbed (two Orin agents, ALFWorld and SynDrift, three seeds): residual deficit (pp), recrystallization latency p50/p95 (ms), per-stage edge--cloud synchronization (MB), foreground blocking, and per-stage energy (J).}
\label{tab:edge}
\footnotesize
\setlength{\tabcolsep}{3.4pt}
\renewcommand{\arraystretch}{1.15}
\begin{tabular}{l|c|c|c|c|c}
\toprule
\textbf{Policy} & $\bar{D}\,\downarrow$ & \textbf{p50/p95} & \textbf{Sync}$\,\downarrow$ & \textbf{Block}$\,\downarrow$ & \textbf{Energy} \\
\midrule
LRU & 9.1 & -- & 16.0 & 0.2\% & 491 \\
CURATOR & 6.1 & -- & 16.9 & 0.2\% & 463 \\
\rowcolor{crystalblue!10}
\textbf{\sys{} (Ours)} & \textbf{2.0} & \textbf{586\,/\,2007} & \textbf{3.4} & \textbf{2.9\%} & \textbf{479} \\
\bottomrule
\end{tabular}
\end{table}

\begin{figure}[t]
\centering
\includegraphics[width=0.9\columnwidth]{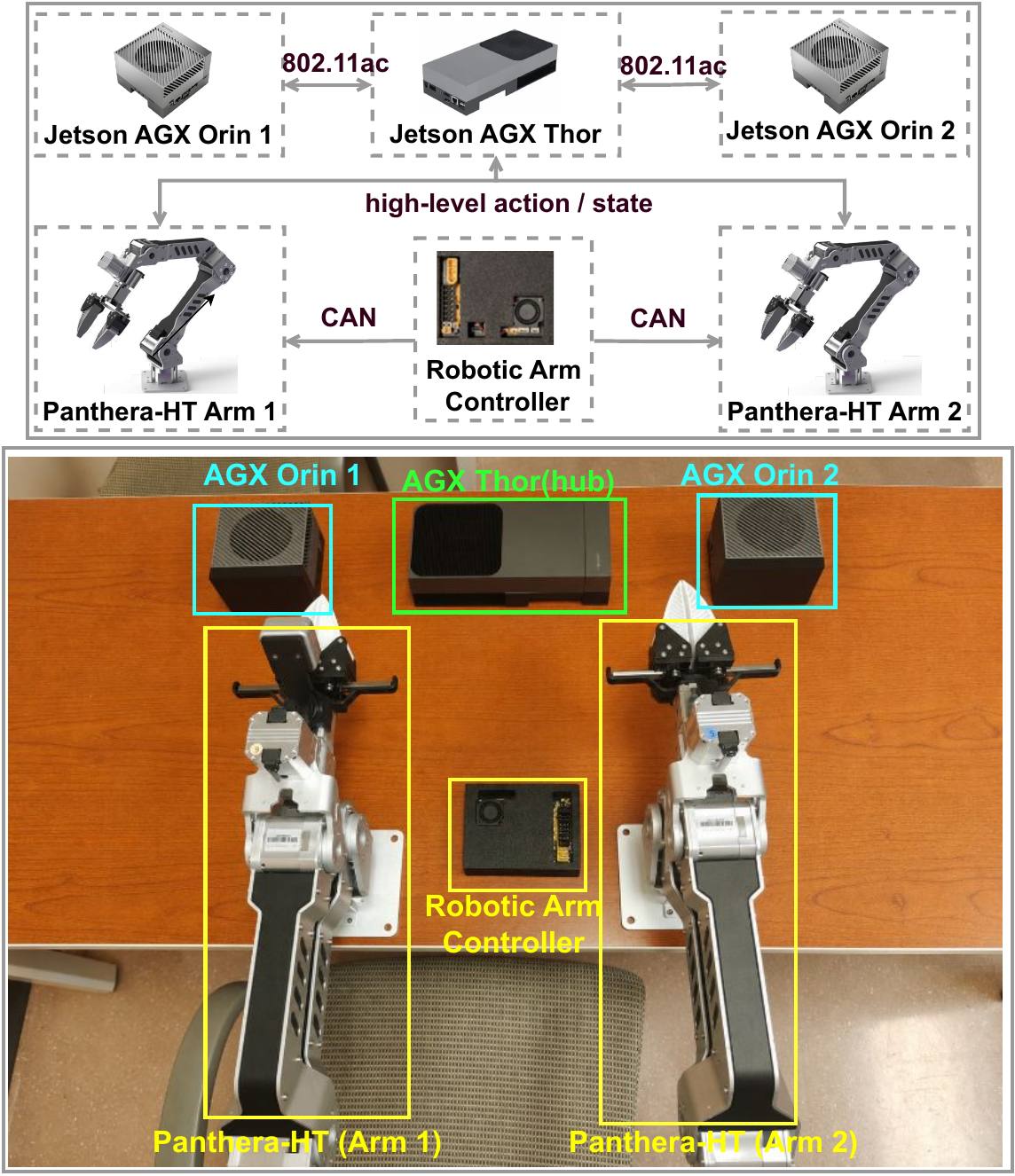}
\caption{Physical testbed: topology (top) and deployed hardware (bottom).}
\label{fig:testbed}
\end{figure}

\begin{figure*}[t]
\centering
\subfloat[On-device trajectories]{\includegraphics[width=0.245\textwidth]{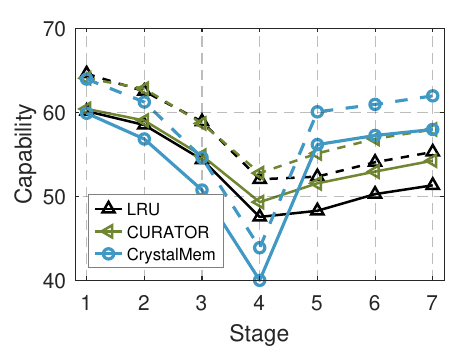}}\hfill
\subfloat[Recrystallization latency]{\includegraphics[width=0.245\textwidth]{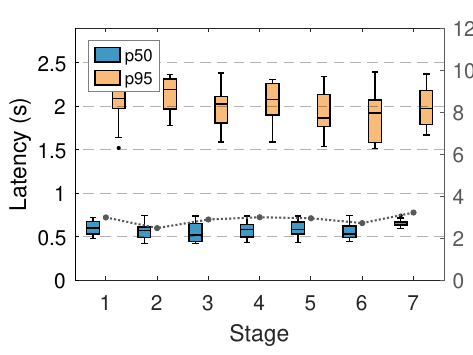}}\hfill
\subfloat[Synchronization traffic]{\includegraphics[width=0.245\textwidth]{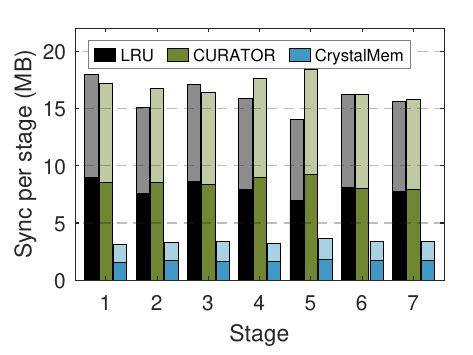}}\hfill
\subfloat[Recovery energy]{\includegraphics[width=0.245\textwidth]{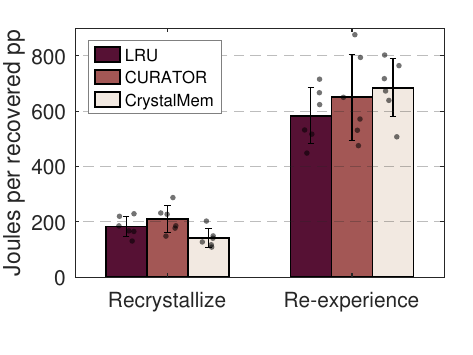}}
\caption{Edge deployment profile (seed means). (a)~Capability on the Orin
agents: solid ALFWorld, dashed SynDrift. (b)~\sys{} recrystallization
latency per stage (p50 and p95 boxes over devices, environments, and
seeds) with the foreground-blocking share on the right axis (dotted).
(c)~Per-stage synchronization traffic; the darker segment is uplink.
(d)~Energy per recovered capability point: recrystallization against
re-experience; dots are environment--seed observations.}
\label{fig:edge}
\end{figure*}

The physical testbed has a Jetson AGX Thor 128\,GB hub serving Qwen2.5-7B
through vLLM and two AGX Orin 64\,GB agents whose memory planes run
on-device Qwen2.5-3B, connected over real 802.11ac
(Fig.~\ref{fig:testbed}). Each Orin also drives a HighTorque Panthera-HT
arm, whose low-level motion runs on a shared CAN controller. The elastic
budget is the edge storage quota,
cycled through Definition~\ref{def:schedule} on ALFWorld and SynDrift with
three seeds; power is metered at $10$\,Hz through INA3221.

The capability results survive on hardware (Table~\ref{tab:edge}).
Figure~\ref{fig:edge} profiles the cycle: \sys{} dips deepest at the
trough but rejoins one stage after the turn, with flat latency and blocking
across stages. \sys{} restores to within $2.0$\,pp, compared with
$6.1$\,pp for CURATOR and $9.1$\,pp for LRU, preserving the cluster
ordering. Its costs remain serving-compatible: recrystallization runs at
$586$\,ms median and $2.0$\,s p95 per entry batch on the Orin, blocks
foreground inference $2.9\%$ of the time, and crystallized synchronization
moves $3.4$\,MB per stage against $16$--$17$\,MB for full-fidelity
shipping, a $4.8$--$5.0\times$ reduction over the real link. Per-stage
energy on the Orin agents is flat across policies ($463$--$491$\,J), so
loop closure adds no energy premium at the edge. The dedicated recovery
comparison sharpens the energy
account: recovering one point of capability by recrystallization costs
$141$\,J, compared with $684$\,J by re-experiencing the underlying
episodes, a $0.21\times$ ratio. A four-process tenant mini-deployment
across the three Jetsons ran the same cycles concurrently with no
out-of-memory events, per-process capability within $0.6$\,pp of solo runs,
and hub p95 latency at $1.5\times$ solo.

As a closing case study, a HighTorque arm driven by an Orin-hosted skill
memory runs one squeeze--recover cycle: the edge quota falls to $25\%$ and
returns. Under \sys{}, the arm's skill fragments demote to residues and
recrystallize on recovery; post-recovery task success is $89\%$ of the
pre-squeeze level. Under binary eviction, the demoted skills are gone, and
success plateaus at $61\%$.

\section{Conclusion}
\label{sec:conclusion}

In cloud-served self-evolving large language model agents, we identified
\emph{memory hysteresis}: after an experience-store budget is squeezed and
restored, capability settles below the pre-squeeze level. Deletion and
one-way compression cause the gap because they discard the material needed
for later rebuilding; we also prove a residual-deficit floor for any
keep/drop-only policy. \sys{} closes the loop by replacing binary memory
management with crystallized memory: entries move across four fidelity
states under a crystallization-energy schedule, demotions are ordered by
advantage-weighted influence with dependency coupling, and verified
recrystallization recovers capability under explicit compute and byte caps.
Across seven environments, seventeen methods, and six backbones, \sys{}
achieves the highest restored capability everywhere, leaves a residual
deficit an order of magnitude below every binary baseline, and, from half
the byte budget, matches the strongest budgeted baseline at full
provision. The advantage
carries into deployment: at the four-tenant operating point, crystallized
stores yield an order of magnitude fewer service-level violations than fair
splitting of binary stores, and on a physical edge--cloud testbed, each
recovered point of capability costs about a fifth of the energy required to
re-experience the underlying episodes.

\bibliographystyle{IEEEtran}
\bibliography{bib/refs}

\end{document}